\newcommand{\blind}{0}
\newtheorem{theorem}{Theorem}[section]
\newcommand{\numbereqn}{\addtocounter{equation}{1}\tag{\theequation}} 
\newcommand{\argmax}{\operatornamewithlimits{argmax}}
\newcommand{\argmin}{\operatornamewithlimits{argmin}}
\newcommand{\Tm}{{\Theta_m}}
\newcommand{\softmax}{{\mathrm{softmax}}}
\newcommand{\cM}{\mathcal{M}}
\begin{document}

\def\spacingset#1{\renewcommand{\baselinestretch}%
{#1}\small\normalsize} \spacingset{1}

\date{}
\if0\blind
{ 
  \title{\bf Neural-$g$: A Deep Learning Framework for Mixing Density Estimation \thanks{The authors gratefully acknowledge support from the National Science Foundation grants DMS-2015528 (RB, SW) and DMS-2112887 (QQ).}}
  
\author{Shijie Wang$^a$, Saptarshi Chakraborty$^b$, Qian Qin$^c$, Ray Bai$^a$  \vspace{0.2cm}\\ 
    $^a$Department of Statistics, University of South Carolina, Columbia, SC 29208 \\ 
    $^b$Department of Biostatistics, University at Buffalo, Buffalo, NY 14214 \\ 
    $^c$School of Statistics, University of Minnesota, Minneapolis, MN 55455\\}

  \maketitle
} \fi

\if1\blind
{
  \begin{center}
    {\LARGE\bf Title}
\end{center}
} \fi

\begin{abstract}
Mixing (or prior) density estimation is an important problem in statistics, especially in empirical Bayes $g$-modeling where accurately estimating the prior is necessary for making good posterior inferences. In this paper, we propose \emph{neural-$g$}, a new neural network-based estimator for $g$-modeling. Neural-$g$ uses a softmax output layer to ensure that the estimated prior is a valid probability density. Under default hyperparameters, we show that neural-$g$ is very flexible and capable of capturing many unknown densities, including those with flat regions, heavy tails, and/or discontinuities. In contrast, existing methods struggle to capture all of these prior shapes. We provide justification for neural-$g$ by establishing a new universal approximation theorem regarding the capability of neural networks to learn arbitrary probability mass functions. To accelerate convergence of our numerical implementation, we utilize a weighted average gradient descent approach to update the network parameters. Finally, we extend neural-$g$ to multivariate prior density estimation. We illustrate the efficacy of our approach through simulations and analyses of real datasets. A software package to implement neural-$g$ is publicly available at \url{https://github.com/shijiew97/neuralG}.


\end{abstract}

\noindent%
{\it Keywords:} deep neural network, empirical Bayes, $g$-modeling, latent mixture model, multivariate mixture model

\spacingset{1.5} 

\newpage 

\section{Introduction}\label{sec:intro}

Mixture models have been applied extensively to many statistical procedures, including random effects and mixed effects models \citep{MagderZeger1996}, latent class models \citep{Oberski2016}, measurement error models \citep{RoederCarrollLindsay1996}, and empirical Bayes \citep{EfronBradley2010LIEB, KoenkerGu2017}. See \cite{lindsay1995mixture} for an overview of applications of mixture models. Suppose that we observe $n$ independent observations $\bm{y} = (y_1, \ldots, y_n)$, where the $y_i$'s are drawn from a known conditional density $f(y_i \mid \theta_i)$ but the $\theta_i$'s arise from an unknown latent density $\pi$. That is,
\begin{equation}\label{eq:mixturemodel}
    y_i\mid\theta_i\sim{f}(y_i\mid\theta_i),~~\theta_i \sim \mathrm{\pi}(\theta),~~~~ i = 1, \ldots, n, 
\end{equation} 
where the mixing density (or prior) $\pi$ is unknown. The problem of estimating $\pi$ has been referred to as $g$-modeling in the literature \citep{Efron2014modeling, efron2016empirical}, and there is an extensive literature on ways to estimate the prior \citep{kiefer1956consistency, laird1978nonparametric, Morris1983, Efron2014modeling, efron2016empirical}. Accurate estimation of the prior is important for making good infererences about the Bayes estimator $\mathbb{E}[\theta \mid \bm{y}]$ and other functionals of the posterior $\pi( \theta \mid \bm{y})$  \citep{Efron2014modeling, efron2016empirical}.

A classic approach to $g$-modeling is the nonparametric maximum likelihood estimator (NPMLE), originally introduced by \cite{kiefer1956consistency}. In NPMLE, inference for $\mathrm{\pi}$ proceeds by maximizing the complete likelihood $ \prod_{i=1}^n \int_{\Theta} f(y_i\mid \theta_i) \pi(\theta_i) d \theta_i$, where $\Theta$ denotes the parameter space for the $\theta_i$'s. Let $\Pi_{\Theta}$ denote the class of all probability densities defined on $\Theta$. The NPMLE is defined as
\begin{equation}\label{eq:kwnpmle}
\widehat{\pi} = \underset{\pi \in \Pi_{\Theta}}{\mathrm{argmax}}~\frac{1}{n} \sum_{i=1}^n \log \bigg\{ \int_{\Theta} f(y_i\mid \theta_i ) \pi(\theta_i) d \theta_i \bigg\},
\end{equation}
\cite{lindsay1995mixture} proved that under some mild conditions, the solution $\widehat{\pi}$ to \eqref{eq:kwnpmle} exists, is unique, and is almost surely discrete with support of at most $n$ atoms. However, since the atoms for $\widehat{\pi}$ and the true $\Theta$ are typically unknown, it is necessary to discretize the problem \eqref{eq:kwnpmle} over a grid of $m$ known points for most practical implementations of NPMLE. In practice, we often assume that $\pi$ is defined on a finite set $\Theta_m = \{ \theta_1, \ldots, \theta_m \}$, where $m \leq n$ and $\theta_1 <  \cdots < \theta_m$ are specified in advance, usually in the range from $y_{\min}$ to $y_{\max}$.


With the specification of the set $\Theta_m$, the NPMLE solution can be determined by solving
\begin{equation} \label{eq:kwnpmle_sol}
    \widehat{\bm{w}} = \argmax_{\bm{w} \in S_{m-1}}~\frac{1}{n} \sum_{i=1}^{n} \log \left\{ \sum_{j=1}^{m} f(y_i \mid \theta_j) w_j  \right\},
\end{equation}
where $\bm{w} = (w_1, \ldots, w_m)^\top$ is a vector of mixture probabilities and $\mathcal{S}_{m-1} = \{ \bm{w} : w_j \geq 0, j = 1, \ldots, m,~ \sum_{j=1}^{m} w_j = 1 \}$ denotes the $(m-1)$-simplex. The NPMLE estimator \eqref{eq:kwnpmle} is then taken to be $\widehat{\pi}_{\text{NPMLE}} = \sum_{j=1}^{m} \widehat{w}_j \delta_{\theta_j}$, where $\delta_{\theta_j}(\theta) = 1$ if $\theta = \theta_j$ and $\delta_{\theta_j}(\theta) = 0$ otherwise. \citet{koenker2014convex} observed that \eqref{eq:kwnpmle_sol} has an equivalent dual convex formulation which can be solved with the interior point method. 

Despite efficient methods for solving \eqref{eq:kwnpmle_sol}, it has been observed that in practice, the NPMLE tends to be quite coarse. In other words, the NPMLE \eqref{eq:kwnpmle} tends to fit mixture models where the number of atoms is \emph{much} smaller than $n$. \cite{koenker2014convex} suggested that the number of nonzero mixture components is typically $O(\sqrt{n})$. Later, \cite{polyanskiy2020} provided theoretical justification for this empirical observation by establishing the self-regularizing property of the NPMLE. They proved that if the density $f( \cdot \mid \theta )$ in \eqref{eq:mixturemodel} belongs to the exponential family, then the number of atoms in the NPMLE \eqref{eq:kwnpmle} can actually be upper bounded by $O(\log n)$. 

The coarseness of the NPMLE can be a practical hindrance, especially in empirical Bayes applications where a finer representation of the prior $\pi(\theta)$ is often desirable for posterior inference. For example, under a discrete prior with only a few atoms, posterior credible intervals for $\pi(\theta \mid \bm{y})$ are often too narrow and overly confident \citep{koenker2020empirical}. As a result, several authors have introduced smoothed variants of the NPMLE based on kernel smoothing, penalized splines, boosting, or bootstrapping \citep{Silverman1990JRSSB, liu2009functional, li2021boosting, Wang2024GBNPMLE}. However, these smoothed NPMLE methods remain in the general nonparametric framework and assume \textit{a priori} that the prior $\pi$ in \eqref{eq:mixturemodel} is smooth. In the absence of prior knowledge about the characteristics of the true mixing density, it may be desirable to have a flexible method that can estimate either a non-smooth \emph{or} a smooth prior. \cite{efron2016empirical} also pointed out that nonparametric $g$-modeling approaches often suffer from a lack of efficiency, with particularly slow rates of convergence.

As an alternative to nonparametric estimation, one can instead restrict attention to a parametric family \citep{Morris1983, efron2016empirical}. That is, instead of maximizing the total likelihood over the space of \emph{all} probability densities on $\Theta$ as in \eqref{eq:kwnpmle}, one can instead maximize $\pi$ with respect to a parametric family of densities. This is the approach taken by \cite{efron2016empirical}, henceforce referred to as \textit{Efron's $g$}. Similar to \eqref{eq:kwnpmle_sol}, \cite{efron2016empirical} assumes that the parameter space $\Theta_m = \{ \theta_1, \ldots, \theta_m \}$ is known, which is necessary for practical implementation. Despite being a discrete estimator, Efron's $g$ can still be quite smooth in practice. Setting the grid size $m$ to be arbitrarily large can produce extremely smooth estimates for Efron's $g$.
More specifically, \cite{efron2016empirical} proposed modeling the unknown mixing density $\pi$ in \eqref{eq:mixturemodel} using a curved exponential family \citep{EfronCurved1975} with support $\Theta_{m} =\{\theta_1,\ldots,\theta_m \}$ as
\begin{equation}\label{eq:efron-npmle}
 \pi(\bm{\alpha}) = \text{exp}\{ \bm{Q} \bm{\alpha} -\phi(\bm{\alpha})\} \text{ and } \pi_j(\bm{\alpha}) = \text{exp}\{ \bm{Q}_j^{\top} \bm{\alpha}-\phi(\bm{\alpha})\},~~ j = 1, \ldots, m,
\end{equation}
where $\bm{\alpha} \in \mathbb{R}^p$ is the unknown parameter of interest, $\bm{Q}$ is a known $m\times p$ matrix of basis functions, and $\phi(\bm{\alpha}) =\log\sum_{j=1}^m\exp(\bm{Q}_j^{\top} \bm{\alpha})$ is the normalizing factor. 
Under \eqref{eq:efron-npmle}, \cite{efron2016empirical} first estimates
\begin{equation}\label{eq:efron-npmle2}
\widehat{\bm{\alpha}} = \underset{\bm{\alpha}}{\mathrm{argmax}} \sum_{i=1}^n \log \bigg\{ { \sum_{j=1}^m} \  f(y_i \mid \theta_j) \pi_j(\bm{\alpha})\bigg\} + \lambda \left( {\textstyle \sum_{h=1}^p}\alpha_h^2 \right)^{1/2}
\end{equation}
where the tuning parameter $\lambda$ controls the degree of regularization. As recommended in \cite{efron2016empirical}, the additional regularization term $\sum_{h=1}^p \alpha_h^2$ in \eqref{eq:efron-npmle2} helps to prevent overfitting. Once the solution $\widehat{\bm{\alpha}}$ to \eqref{eq:efron-npmle2} has been obtained, Efron's $g$ is straightforwardly estimated as $\widehat{\pi}_{\text{Efron}} = \sum_{j=1}^{m} \pi_j(\widehat{\bm{\alpha}}) \delta_{\theta_j}$.

\cite{efron2016empirical} suggests a default choice of a natural cubic spline matrix with $p=5$ degrees of freedom for $\bm{Q}$ and $\lambda=1$ or $\lambda=2$ in \eqref{eq:efron-npmle2}. Despite the appeals and the flexibility of Efron's $g$, there are several limitations to this approach. First, the results for Efron's $g$ are highly sensitive to the choices of the penalty parameter $\lambda$ and number of basis functions $p$ \citep{koenker2019comment}. There is also currently a lack of guidance on how to properly tune these parameters. A grid search over $(\lambda, p)$ seems plausible but this requires refitting Efron's $g$ for many different pairs of $(\lambda, p)$, and it remains unclear what criterion should be used to select an ``optimal'' choice of $(\lambda, p)$. On the other hand, the original NPMLE optimization \eqref{eq:kwnpmle_sol} is completely free of such tuning parameters. 

A second issue is that Efron's $g$ may not be well-suited for estimating priors that are not in the curved exponential family \citep{EfronCurved1975}. For example, uniform densities, piecewise constant densities, and a number of heavy-tailed densities all fall outside of the parametric family used by Efron's $g$. However, these types of priors can be very useful in applications. For instance, in survival analysis, a piecewise constant prior is often used to model the baseline hazard function \citep{IbrahimSpringer2001}. This is because in practice, the follow-up time for the study is often split into several intervals \citep{NingCastillo2024}. By allowing the baseline hazard to take a constant value on each sub-interval, clinicians can easily interpret the model output as the underlying risk for each study period in the absence of covariates \citep{NingCastillo2024}. Heavy-tailed priors also play an important role in small area estimation and clinical trials, helping to alleviate the influence of outliers and conflicts between prior information and the data \citep{CookFuquenePericchi2009, BellHuang2006, DattaLahiri1995}. However, as noted, it is difficult for both Efron's $g$ and NPMLE to estimate these types of practically used priors.

The main thrust of this paper is to introduce a flexible procedure for $g$-modeling that strikes a balance between fully nonparametric $g$-modeling (e.g. NPMLE) and fully parametric $g$-modeling (e.g. Efron's $g$). Specifically, we introduce a new method for $g$-modeling based on deep neural networks. Our method, which we call \textit{neural-$g$}, uses a softmax output layer to estimate a valid probability mass function (PMF) over $\Theta_m$. Under a set of default hyperparameters (i.e. the number of hidden layers and number of nodes per hidden layer), neural-$g$ is able to capture a wide range of priors, including both smooth \emph{and} non-smooth densities. Neural-$g$ can also capture especially challenging shapes, such as densities with flat regions, fat tails, and/or discontinuities. Finally, multivariate prior estimation is also straightforward under our neural-$g$ framework. 

We theoretically justify neural-$g$ by establishing a new universal approximation theorem for neural networks with a softmax output layer. We show that neural-$g$ is capable of approximating \emph{any} valid PMF defined over $\Theta_m$. While there is a very rich literature on the approximation capabilities of neural networks \citep{hornik1989multilayer, hornik1991approximation, lu2017expressive}, existing results mostly concern the approximation of \textit{continuous} functions (i.e. networks with the identity function as the activation in the output layer). Our theory differs in the sense that: a) we establish the approximation capability of neural networks for estimating \textit{PMF}s where the target estimand is both discrete \emph{and} constrained to sum to one, and b) our theory is established for the \emph{softmax} activation function in the output layer rather than the identity function. 

The rest of the paper is structured as follows. Section \ref{sec:neural-g} introduces the neural-$g$ estimator for univariate mixing density estimation and establishes its universal approximation capability. Section \ref{sec:implement} introduces a weighted average gradient method for numerically optimizing neural-$g$ which accelerates the convergence of stochastic gradient descent (SGD). Section \ref{sec:simulate} demonstrates neural-$g$'s flexibility over NPMLE and Efron's $g$ through several simulated examples. Section \ref{sec:multivariate} extends neural-$g$ to multivariate prior estimation. Section \ref{sec:real} applies neural-$g$ to real data applications, including Poisson mixture models and a measurement error model. Section \ref{sec:conclusion} concludes the paper with a brief discussion.

\section{The Neural-$g$ Estimator} \label{sec:neural-g}

\subsection{Motivating Examples}\label{sec:motiv}

Before formally introducing our proposed neural-$g$ estimator, we first illustrate its flexibility in two simple examples. Under the mixture model \eqref{eq:mixturemodel}, we generated a synthetic dataset $y_1, \ldots, y_n$ of size $n=4000$ as follows:
\begin{itemize}
    \item \textbf{Point mass prior}: $y \mid \theta \sim \mathcal{N}(\theta, 0.5),~ \theta = -5 \text{ w.p. } 0.3,  \theta = 0 \text{ w.p. } 0.4, \theta = 5 \text{ w.p. } 0.3.$, where ``w.p.'' denotes ``with probability.''
    \item \textbf{Gaussian prior}: $y \mid \theta \sim \mathcal{N}(\theta, 1),~ \theta \sim \mathcal{N}(0,1).$
    \end{itemize}
These are Gaussian location mixtures where the mixing density $\pi$ is either a three-component discrete mixture or a standard normal density. We compared the estimated neural-$g$ under default hyperparameters (described in Section \ref{sec:implement}) to the NPMLE and Efron's $g$ with four different pairs of smoothing parameters $(p, \lambda) \in \{ (5,1), (5, 10), (20,1), (20, 10) \}$. For all estimators, we used an equispaced grid of $m=100$ points from $y_{\min}$ to $y_{\max}$ for $\Theta_m$.

The performance of these different density estimators is summarized in Figure \ref{fig:m1-m2}. The top two panels of Figure \ref{fig:m1-m2} plot the estimated densities, and the bottom two panels plot the estimated cumulative distribution functions (CDFs). For the point mass prior (left two panels), the neural-$g$ estimator is shown to be very close to the true discrete mixture of three atoms, and neural-$g$'s estimated CDF is very close to the true step function CDF. For the Gaussian prior (right two panels), neural-$g$ also effectively recovers the Gaussian shape, and its estimated CDF is the closest to the true continuous CDF among all estimators. 

\begin{figure}[t]
    \centering
    \includegraphics[width=1.0\textwidth]{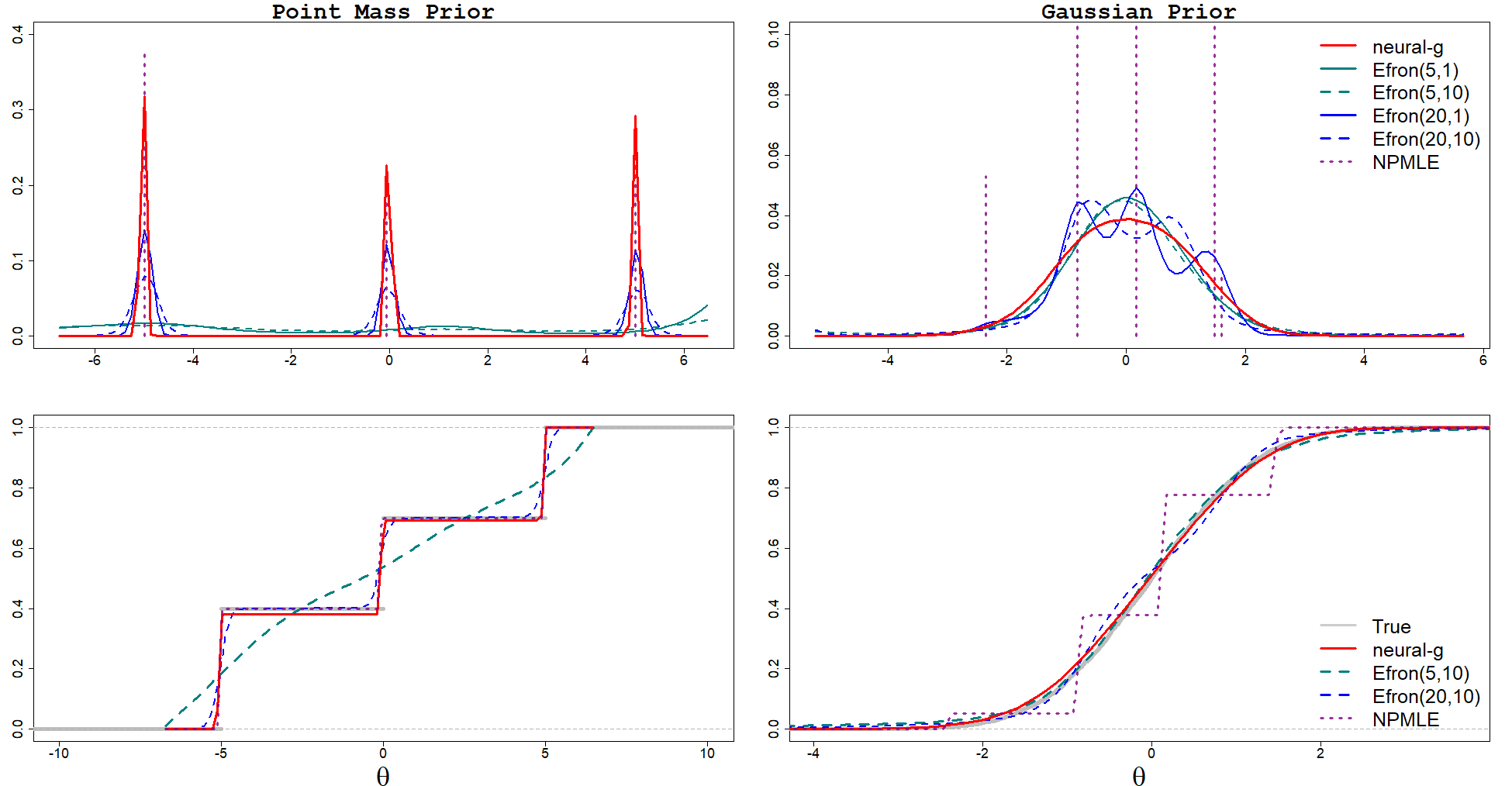}
\caption{Performance of different estimators when the true mixing density is a point mass prior (left two panels) or a Gaussian prior (right two panels). The top two panels plot the estimated densities, while the bottom two panels plot the estimated CDFs.}\label{fig:m1-m2}
\end{figure}

In comparison to neural-$g$, Figure \ref{fig:m1-m2} shows that NPMLE performs very well for the point mass prior when there are only a few atoms. However, NPMLE is too coarse to capture the Gaussian prior; in this case, NPMLE only estimated five atoms. Efron($5, 1$) and Efron($5, 10$) do not perform well for the point mass prior, resulting in estimates that are overly smooth. However, Efron($20, 1$) and Efron($20, 10$) are able to approach the true point mass mixture. Meanwhile, for the Gaussian prior, Efron($5, 1$) and Efron($5, 10$) perform well, but Efron ($20, 1$) and Efron($20, 10$) result in multimodal density estimates that are \emph{too} wiggly. These results show that the performance of Efron's $g$ is heavily influenced by the degrees of freedom $p$ and the penalty parameter $\lambda$. In contrast, neural-$g$ performed well in both examples under the exact same network architecture.

These two simple Gaussian location mixture examples show that neural-$g$ can adapt to \emph{both} very discrete (i.e. non-smooth) and very smooth mixing densities, while requiring no extra tuning of the network hyperparameters. In Section \ref{sec:simulate}, we further showcase mixing densities (i.e. uniform, piecewise constant, heavy-tailed, and bounded) that are difficult for either NPMLE or Efron's $g$ to estimate well but which neural-$g$ estimates with ease.

\subsection{Introducing Neural-$g$} \label{sec:net-npmle}

In the absence of \textit{a priori} knowledge about the characteristics of the true mixing density, we would like to be able to estimate both non-smooth \emph{and} smooth priors $\pi$ in \eqref{eq:mixturemodel}. Ideally, we would also like to avoid needing to tune smoothness parameters. This motivates us to consider neural networks for estimation of $\pi$. We consider \textit{multilayer perceptrons} (MLPs), or fully connected feedforward networks constructed by composing activated linear transformations \citep{GoodfellowDeep2016}. 

We first briefly review the basic MLP structure. For $k=1,\dots,K$, let $g_k$ denote the feedforward mapping represented by $h_k$ hidden nodes, where $g_k: \mathbb{R}^{h_k}\mapsto \mathbb{R}^{h_{k+1}}$ is defined as $g_k(\bm{z})= \sigma_k(\bm{W}^{(k)}\bm{z} +\bm{b}^{(k)})\in\mathbb{R}^{h_{k+1}}$, $\bm{z} \in\mathbb{R}^{h_k}$ is the input variable of $g_k$, $\bm{W}^{(k)} \in \mathbb{R}^{h_{k+1} \times h_k}$ is a weight matrix, $\bm{b}^{(k)} \in \mathbb{R}^{h_{k+1}}$ is a bias vector, and $\sigma_k(\cdot)$ is a (possibly nonlinear) activation function applied elementwise to $\bm{W}^{(k)}\bm{z} +\bm{b}^{(k)}$. Letting $\phi = ( \bm{W}^{(1)}, \ldots, \bm{W}^{(K)}, \bm{b}^{(1)}, \ldots, \bm{b}^{(K)})$ denote the network parameters, a $K$-layer MLP function $g_{\phi}: \mathbb{R}^{h_1} \mapsto \mathbb{R}^{D}$ can be defined by the composition of the functions,
\begin{equation*}
    g_{\phi}(\bm{x}) =  g_{K} \circ \dots \circ g_{1}(\bm{x}),
\end{equation*} 
where $g_K: \mathbb{R}^{h_{K}} \rightarrow \mathbb{R}^{D}$ maps the final $g_{K-1} \circ \dots \circ g_1(\bm{x})$ to the $D$-dimensional output space of $g_{\bm{\phi}}$. The first $K-1$ layers are hidden layers, while the final layer (i.e., layer $K$) of the MLP is the \textit{output} layer. In practice, it is common for the first $K-1$ activations $\sigma_1, \ldots, \sigma_{K-1}$ to be taken as the rectified linear unit (ReLU) activation \citep{nair2010rectified}, i.e. $\sigma_1(x) = \cdots = \sigma_{K-1}(x) = \max\{ 0, x \}$. Meanwhile, the activation function in the output layer $\sigma_K$ depends on the final $D$ targets. 

Like NPMLE and Efron's $g$, we assume that the prior $\pi$ in the mixture model \eqref{eq:mixturemodel} has a finite support with $m$ points $\Theta_m =  \{ \theta_1, \ldots, \theta_m \}$. Given $\Theta_m$, we define a $K$-layer MLP family $\mathcal{G}_m$ as 
\begin{equation} \label{eq:MLP-family}
    \left\{ g_{\phi} : \mathbb{R}^{1} \mapsto [0,1]^{1}~~\bigg|~~0 \leq g_{\phi}(\theta_j) \leq 1 \text{ for all } j = 1, \ldots, m, \text{ and } \sum_{j=1}^{m} g_{\phi}( \theta_j) = 1 \right\},
\end{equation}
where $\phi$ is the set of weight and bias parameters, and $g_{\phi}$ is constructed as a single-input single-output (SISO) function where $g_{\phi}(\theta_j) = \mathbb{P}(\theta = \theta_j)$ represents the MLP output for the $j$th support point $\theta_j \in \Theta_m$. 
Denote the output of the $K$-layer MLP as $g_{\phi}(\Tm) = (g_{\phi}(\theta_1),\ldots,g_{\phi}(\theta_m))^\top \in \mathbb{R}^{m}$ where the input is $m \times 1$ vector $\Theta_m = (\theta_1, \ldots, \theta_m)^\top$.
In order to estimate a valid density on $\Theta_m$, it must necessarily be the case that all $g_{\phi}(\theta_j)$'s lie in $[0,1]$ and sum to one. To enforce this constraint, the activation function in the output layer is specified as the \textit{softmax} transformation. That is, for $\bm{z} = g_{K-1} \circ \cdots \circ g_1(\Theta_m)$, the mapping in the output layer $g_K (\bm{z}) : \mathbb{R}^{h_{K-1}} \mapsto \mathbb{R}^{m}$ is given by
\begin{align} \label{eq:output-layer}
    g_K (\bm{z}) = \sigma_{\text{softmax}} ( \bm{W}^{(K)} \bm{z} + \bm{b}^{(K)} ),~~ \bm{W}^{(K)} \in \mathbb{R}^{m \times h_{K-1}},~ \bm{b}^{(K)} \in \mathbb{R}^{m},   
\end{align}
where the $j$th element of $\sigma_{\text{softmax}}(\bm{x})$ for $\bm{x} = (x_1, \ldots, x_m)^\top \in \mathbb{R}^{m}$ is defined as 
\begin{align} \label{eq:softmax}
    \sigma_{\text{softmax}}(\bm{x})_j = \frac{e^{x_j}}{\sum_{j=1}^{m} e^{x_j}},~~~j = 1, \ldots, m.
\end{align}
For a given finite set $\Theta_m$, the family of MLP-based PMFs $\mathcal{G}_m$ in \eqref{eq:MLP-family} is broader than the parametric exponential family \eqref{eq:efron-npmle} of \cite{efron2016empirical}. However, we point out that there is a continuous analogue for the family \eqref{eq:efron-npmle} \citep{efron2016empirical, Efron2014modeling}, whereas this is not the case for \eqref{eq:MLP-family}. Even so, \cite{Efron2014modeling} and \cite{efron2016empirical}  have noted that all practical implementations of Efron's $g$ require discretization, whereas continuous constructions are purely theoretical. Technically speaking, all probabilities $g_{\phi} (\theta_j) = P(\theta = \theta_j)$ are strictly positive under the softmax transformation \eqref{eq:softmax}. However, in practice, many of outputted individual probabilities $g_{\phi}(\theta_j)$ can be extremely close to, or effectively, zero. In addition, the PMFs estimated by neural-$g$ can be quite smooth in practice (just like the PMFs estimated by Efron's $g$), provided that the grid size $m$ is sufficiently large.

Having defined the neural network family $\mathcal{G}_m$ in \eqref{eq:MLP-family}, we now introduce the neural-$g$ estimator for estimating the prior $\pi$ in \eqref{eq:mixturemodel}. Let $\Phi$ denote the parameter space for network parameters $\phi$ when $g_{\phi} \in \mathcal{G}_m$. Neural-$g$ solves the optimization,
\begin{equation}\label{eq:net-npmle-phi}
    \hat{\phi}  = \underset{\phi \in \Phi }{\mathrm{argmin}} -\frac{1}{n} \sum_{i=1}^n\mathrm{log}\bigg\{ \sum_{j=1}^{m} f(y_i \mid \theta_j) g_{\phi}(\theta_j) \bigg\}.
\end{equation}
After optimizing the neural network parameters $\phi$ according to \eqref{eq:net-npmle-phi}, the neural-$g$ estimator is given by 
$\widehat{\pi}_{\text{neural-}g} = \sum_{j=1}^{m} g_{\widehat{\phi}}(\theta_j) \delta_{\theta_j}$, where $g_{\widehat{\phi}}(\theta_j) =  \widehat{\mathbb{P}}(\theta = \theta_j), j = 1, \ldots, m$.
  

\subsection{Neural-$g$'s Approximation Capability}\label{sec:approx-net-npmle}

In the past several decades, a number of universal approximation theorems have established that MLPs can approximate \emph{continuous}, bounded functions arbitrarily well, provided that the MLPs are wide enough \citep{hornik1989multilayer, hornik1991approximation, lu2017expressive}. These MLPs use the identity function as the activation in the output layer. Although MLPs with \emph{softmax} output layers are also widely used -- especially for classification tasks \citep{GoodfellowDeep2016}, their theoretical properties are less well-understood. To our knowledge, the only theoretical result for MLPs with softmax output layers is given in \cite{asadi2020approximation}. However, the result in \cite{asadi2020approximation} is \emph{not} applicable in our case, since \cite{asadi2020approximation} require the neural network inputs to be \emph{continuous} and lie in the unit hypercube $[0,1]^d$. In contrast, the inputs for neural-$g$ are \emph{discrete} points $\theta_m \in \Theta_m$, where $\Theta_m$ is a \textit{finite} set. Thus, new theory is needed 
to justify neural-$g$.

In the next theorem, we establish a new universal approximation theorem regarding neural-$g$'s capability to learn arbitrary PMFs. For a vector $\bm{v} = (v_1, \ldots, v_m)^\top \in \mathbb{R}^{m}$, we denote its infinity norm as $\lVert \bm{v} \rVert_{\infty} = \max_{1 \leq j \leq m} | v_j |$. 

\begin{theorem}\label{theo:net-npmle}
	Let $p(\theta)$ be an arbitrary PMF defined on $\Tm = \{ \theta_1, \ldots, \theta_m \}$, and let $p(\Theta_m) = ( p(\theta_1), \ldots, p(\theta_m))^\top$ denote the probability vector where $p(\theta_j) = \mathbb{P}(\theta = \theta_j), j = 1, \ldots, m$. Meanwhile, let $g_{\phi}(\Theta_m) = ( g_{\phi}(\theta_1), \ldots, g_{\phi}(\theta_m))^\top$ denote the probability vector where $g_{\phi} \in \mathcal{G}_m$ and $\mathcal{G}_m$ is the MLP family defined in \eqref{eq:MLP-family}. Then for any arbitrarily small $\epsilon>0$, there exists a single-hidden-layer network $g_{\phi} \in \mathcal{G}_m$ such that
\begin{equation}\label{eq:appro-net-npmle}
\lVert g_{\phi}(\Tm) - p(\Tm) \rVert_{\infty} < \epsilon.
\end{equation}
\end{theorem}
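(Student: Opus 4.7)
The plan is to exploit the fact that the softmax output layer can realize any point in the interior of the probability simplex arbitrarily well, and that its pre-activation vector can be set entirely through the output-layer bias $\bm{b}^{(K)}$ alone, making the hidden layer essentially free. The only mild technicality is that softmax cannot output exact zeros, so any zero entries in $p(\Theta_m)$ must be approximated rather than matched exactly.

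First I would construct a target logit vector $\bm{u}^\star \in \mathbb{R}^m$ whose softmax is close to $p(\Theta_m)$. Let $S = \{ j : p(\theta_j) > 0 \}$ and $Z = \{ j : p(\theta_j) = 0 \}$. For a parameter $M > 0$ to be chosen, set $u_j^\star = \log p(\theta_j)$ for $j \in S$ and $u_j^\star = -M$ for $j \in Z$. Using $\sum_{j \in S} p(\theta_j) = 1$, a direct computation gives $\sum_{k=1}^m e^{u_k^\star} = 1 + |Z|\, e^{-M}$, from which
\[
\bigl\lVert \sigma_{\text{softmax}}(\bm{u}^\star) - p(\Theta_m) \bigr\rVert_\infty \;\le\; m\, e^{-M}.
\]
Choosing $M > \log(m/\epsilon)$ makes the right-hand side strictly less than $\epsilon$.

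Next I would show that some single-hidden-layer $g_\phi \in \mathcal{G}_m$ has pre-softmax output exactly equal to $\bm{u}^\star$. Take $h_1 \ge 1$ hidden units with any activation (e.g.\ ReLU) and any first-layer parameters $\bm{W}^{(1)}, \bm{b}^{(1)}$; then set the output-layer parameters to $\bm{W}^{(2)} = \bm{0} \in \mathbb{R}^{m \times h_1}$ and $\bm{b}^{(2)} = \bm{u}^\star$. Regardless of the hidden representation $\bm{z}$, the pre-softmax vector becomes $\bm{W}^{(2)} \bm{z} + \bm{b}^{(2)} = \bm{u}^\star$, so $g_\phi(\Theta_m) = \sigma_{\text{softmax}}(\bm{u}^\star)$, which is a valid PMF and hence $g_\phi \in \mathcal{G}_m$. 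Combining with the previous step yields \eqref{eq:appro-net-npmle}.

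There is no real technical obstacle here: because the output-layer bias alone realizes any pre-softmax vector, the usual universal-approximation machinery (hidden-layer width, activation choice, etc.) plays no substantive role, and the only care required is the zero-entry handling done above. If one instead interprets $g_\phi$ as a scalar-valued network whose $m$ evaluations $g_\phi(\theta_1), \ldots, g_\phi(\theta_m)$ are jointly softmax-normalized across the grid, then the same plan still works after replacing the trivial realization in the second step by a piecewise-linear ReLU interpolant through the $m$ points $(\theta_j, u_j^\star)$, which a single hidden layer with at most $m-1$ ReLU units can implement exactly.
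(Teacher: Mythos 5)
Your proof is correct, but it takes a genuinely different and more elementary route than the paper's. The paper argues via a triangle-inequality decomposition around the intermediate target $\mathrm{softmax}(h_{\epsilon}(\Theta_m))$, where $h_{\epsilon}(\theta) = \log(2m\,p(\theta)/\epsilon + 1)$ is chosen so that it remains finite at zero-probability atoms; it bounds $\lVert \mathrm{softmax}(h_{\epsilon}(\Theta_m)) - p(\Theta_m)\rVert_{\infty} \le m/(2m/\epsilon + m) < \epsilon/2$ by direct computation, and then handles the other term by invoking the universal approximation theorem of \cite{zeng2005approximation} for the pre-softmax network together with continuity of the softmax map. You instead build the target logits explicitly ($u_j^\star = \log p(\theta_j)$ on the support, $-M$ off it), prove the clean bound $m e^{-M}$, and realize those logits \emph{exactly} with a finite network, so you need neither an external approximation theorem nor a softmax-continuity argument, and you get an explicit width bound ($\le m-1$ ReLU units) that the paper's proof does not provide. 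One caveat: your primary construction (zero output weights, bias $\bm{b}^{(K)} = \bm{u}^\star$) presupposes the vector-output reading of \eqref{eq:output-layer}, under which the network ignores its input entirely; but the family $\mathcal{G}_m$ in \eqref{eq:MLP-family} is defined as scalar-input scalar-output functions whose $m$ grid evaluations are jointly softmax-normalized, which is exactly the reading the paper's proof adopts by writing $g_{\phi}(\Theta_m) = \mathrm{softmax}(v_{\phi}(\Theta_m))$ for a SISO pre-softmax network $v_{\phi}$. So your fallback ReLU interpolant is the operative argument, and it is sound: taking hidden units $\max(0, \theta - \theta_k)$ for $k = 1, \ldots, m-1$ and solving for the output weights gives a lower-triangular linear system with strictly positive diagonal entries $\theta_j - \theta_{j-1}$, hence exact interpolation of the $m$ pairs $(\theta_j, u_j^\star)$, and the conclusion \eqref{eq:appro-net-npmle} follows.
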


\begin{proof}
    Appendix \ref{sec:theo-net-npmle}.
\end{proof}
\noindent Theorem \ref{theo:net-npmle} shows that there exists a one-hidden-layer MLP $g_{\phi} \in \mathcal{G}_m$ which can approximate \emph{any} PMF function on $\Tm$ arbitrarily well. This justifies restricting attention to the family $\mathcal{G}_m$ in \eqref{eq:MLP-family} for optimizing the mixing density $\pi$ in \eqref{eq:mixturemodel}.

Next, we show that that there also exists a PMF $\pi_{\text{neural-}g} = \sum_{j=1}^{m} g_{\phi}(\theta_j) \delta_{\theta_j}$ which can approximate the solution to the loss function,
\begin{equation} \label{eq:optim-over-Thetam}
    \argmin_{\pi \in \mathcal{F}({\Tm})}-\frac{1}{n} \sum_{i=1}^{n} \log \left\{ \sum_{j=1}^{m} f(y_i \mid \theta_j) \pi(\theta_j) \right\}, 
\end{equation}
where $\mathcal{F}({\Theta_m})$ is a particular family of PMFs on $\Theta_m$. For example, if $\mathcal{F}(\Tm)$ is the set of all PMFs $\sum_{j=1}^{m} w_j \delta_{\theta_j}$ where $\bm{w} = (w_1, \ldots, w_m)^\top \in \mathcal{S}_{m-1}$, then \eqref{eq:optim-over-Thetam} is the NPMLE solution \eqref{eq:kwnpmle_sol}. Meanwhile, if $\mathcal{F}(\Tm)$ is the set of all PMFs $\sum_{j=1}^{m} \pi_j(\bm{\alpha}) \delta_{\theta_j}$ where $\pi(\bm{\alpha})$ is the curved exponential family in \eqref{eq:efron-npmle}, then \eqref{eq:optim-over-Thetam} is a (nonpenalized) Efron's $g$ estimator.
We first state the following assumptions:
\begin{enumerate}[label=(A\arabic*)]
\item There exists at least one $\theta \in \Theta_m$ so that $f(y_i \mid \theta) > 0$ for all $i = 1, \ldots, n$. Furthermore, $f(y_i \mid \theta_j ) < \infty$ for all $i = 1, \ldots, n$ and all $\theta_j \in \Theta_m$.
\item A global minimum $\widehat{\pi}_m$ for \eqref{eq:optim-over-Thetam} exists.
\end{enumerate}
Assumption (A1) ensures that the loss function \eqref{eq:optim-over-Thetam} is well-defined given observed data $\bm{y}$, while Assumption (A2) assumes that \eqref{eq:optim-over-Thetam} can be solved over the class of PMFs $\mathcal{F}(\Theta_m)$. The next theorem states that there exists a density $\pi_{\text{neural-}g}$ which can approximate the solution to \eqref{eq:optim-over-Thetam} well.




\begin{theorem}\label{theo:net-npmle-2} 
	For $\Theta_m = \{ \theta_1, \ldots, \theta_m \}$, suppose that Assumptions (A1)-(A2) hold. Let $\ell(\pi)$ denote the loss function in \eqref{eq:optim-over-Thetam}, and let $\widehat{\pi}_m$ be the global minimum of \eqref{eq:optim-over-Thetam}. Then there exists a PMF $\pi_{\text{neural-}g} = \sum_{j=1}^{m} g_{\phi}(\theta_j) \delta_{\theta_j}$, where $g_{\phi} \in \mathcal{G}_m$ is a single-hidden-layer MLP in the family \eqref{eq:MLP-family}, such that for any arbitrarily small $\epsilon>0$, 
\begin{equation}
    \vert \ell(\pi_{\text{neural-}g}) - \ell(\widehat{\pi}_m) \vert <\epsilon.
\end{equation}
\end{theorem}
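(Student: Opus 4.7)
The plan is to bridge Theorem~\ref{theo:net-npmle}, which gives uniform PMF approximation over $\Theta_m$, with a continuity argument for the loss $\ell(\pi)$ viewed as a function of the probability vector $\pi(\Theta_m) \in \mathcal{S}_{m-1}$. First I would appeal to Assumption (A2) to fix a global minimizer $\widehat{\pi}_m$, then apply Theorem~\ref{theo:net-npmle} with the probability vector $\widehat{\pi}_m(\Theta_m)$ in place of $p(\Theta_m)$: for any $\delta > 0$ there exists a single-hidden-layer $g_\phi \in \mathcal{G}_m$ with $\lVert g_\phi(\Theta_m) - \widehat{\pi}_m(\Theta_m)\rVert_\infty < \delta$. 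Setting $\pi_{\text{neural-}g}(\theta_j) := g_\phi(\theta_j)$, the remaining work is to show that if $\delta$ is small then $|\ell(\pi_{\text{neural-}g}) - \ell(\widehat{\pi}_m)|$ is small.

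The central estimate I would establish is a local Lipschitz bound on $\ell$. Write $a_i := \sum_j f(y_i \mid \theta_j)\,\pi_{\text{neural-}g}(\theta_j)$ and $b_i := \sum_j f(y_i \mid \theta_j)\,\widehat{\pi}_m(\theta_j)$. By (A1), $\max_{i,j} f(y_i \mid \theta_j) =: M < \infty$, so
\begin{equation*}
|a_i - b_i| \;\le\; \sum_{j=1}^m f(y_i \mid \theta_j)\,\bigl|\pi_{\text{neural-}g}(\theta_j) - \widehat{\pi}_m(\theta_j)\bigr| \;\le\; m M \delta.
\end{equation*}
Next I would argue that $b_{\min} := \min_{1 \le i \le n} b_i > 0$: since (A1) guarantees the existence of a point mass PMF with finite loss, the minimizer $\widehat{\pi}_m$ must also yield a finite loss, which forces $b_i > 0$ for every $i$, and a minimum over a finite index set is strictly positive. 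Choosing $\delta$ small enough that $mM\delta \le b_{\min}/2$ then gives $a_i \ge b_{\min}/2 > 0$ as well. Applying the mean value theorem to $\log$ on $[b_{\min}/2, \infty)$ yields
\begin{equation*}
|\log a_i - \log b_i| \;\le\; \frac{|a_i - b_i|}{\min(a_i, b_i)} \;\le\; \frac{2 m M \delta}{b_{\min}},
\end{equation*}
and summing over $i$ produces $|\ell(\pi_{\text{neural-}g}) - \ell(\widehat{\pi}_m)| \le 2 m M \delta / b_{\min}$. Choosing $\delta < \epsilon\, b_{\min}/(2mM)$ (and also $\le b_{\min}/(2mM)$) completes the argument.

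The main obstacle I expect is not the approximation step itself, which is handed to us by Theorem~\ref{theo:net-npmle}, but rather verifying that the $b_i$ are uniformly bounded away from zero so the logarithm is locally Lipschitz. This hinges on exploiting (A1) twice: once to bound the numerator $f(y_i \mid \theta_j)$ above, and once, via the finiteness of the optimal loss, to bound the denominator $b_i$ below. A minor subtlety worth mentioning is that softmax outputs are strictly positive, so $\pi_{\text{neural-}g}$ automatically lies in the relative interior of $\mathcal{S}_{m-1}$; this is convenient because it means one never has to worry about $a_i$ vanishing when $b_i$ is already safely positive. Everything else reduces to a one-line triangle inequality followed by the Lipschitz bound on $\log$ derived above.
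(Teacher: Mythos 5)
Your proposal is correct and takes essentially the same route as the paper's proof: both apply Theorem \ref{theo:net-npmle} to uniformly approximate $\widehat{\pi}_m(\Theta_m)$, bound $\vert L_i(\pi_{\text{neural-}g}) - L_i(\widehat{\pi}_m)\vert$ by $mM\delta$ via the triangle inequality and the upper bound on $f(y_i \mid \theta_j)$ from (A1), deduce positivity of the mixture likelihoods at $\widehat{\pi}_m$ from finiteness of the optimal loss under (A2), and finish with the Lipschitz bound on $\log$ over an interval bounded away from zero. Your write-up is in fact slightly tidier on one point --- you fix the lower bound $b_{\min}/2$ \emph{before} choosing $\delta$, whereas the paper selects $g_\phi$ with tolerance $c_1\epsilon/(mc_2)$ while deferring the definition of $c_1$ (which itself depends on $L_i(\pi_{\text{neural-}g})$) --- but the decomposition and the key estimates are identical.
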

\begin{proof}
    Appendix \ref{sec:theo-net-npmle-2}.
\end{proof}
\noindent Theorem \ref{theo:net-npmle-2} implies that neural-$g$ can approximate both the NPMLE \emph{and} smooth estimators such as Efron's $g$.

    It is important to stress that Theorem \ref{theo:net-npmle-2} does \textit{not} imply that the estimator  $\widehat{\pi}_{\text{neural-}g} = \sum_{j=1}^{m} g_{\widehat{\phi}}(\theta_j) \delta_{\theta_j}$, where $\widehat{\phi}$ is the global minimum of \eqref{eq:net-npmle-phi}, is the same as the global minimum of \eqref{eq:optim-over-Thetam} for any arbitrary PMF family $\mathcal{F}(\Theta_m)$. This will not be the case in general, and further, since \eqref{eq:net-npmle-phi} is a nonconvex optimization problem, our method only finds a local minimum for $\widehat{\pi}_{\text{neural-}g}$. Instead, Theorem \ref{theo:net-npmle-2} states that neural-$g$ has the \emph{capability} to approximate either atomic solutions (e.g. NPMLE) \emph{or} smooth solutions (e.g. Efron's $g$) for the mixing density $\pi$ in \eqref{eq:optim-over-Thetam}. This demonstrates neural-$g$'s adaptability to different prior shapes. 


\subsection{Related Work}

We pause briefly to contrast neural-$g$ with other existing works on neural networks for mixing density estimation. \citet{Vandegar2021}, \citet{Dorkhorn2020}, \citet{QiuWang2021}, and \citet{Wang2024GBNPMLE} have proposed to estimate the prior $\pi$ in the latent mixture model \eqref{eq:mixturemodel} using deep \textit{generative} models. These methods do not estimate $\pi$ directly but instead use DNNs to learn a mapping $G(\cdot)$ from a noise vector to the target density $\pi$. The estimated map is then used to approximately \emph{generate} samples from the prior $\pi$ in \eqref{eq:mixturemodel}. 

Neural-$g$ is not a generative model. Instead, neural-$g$ directly estimates a PMF over a discrete parameter space $\Theta_m$ for $\pi$ in \eqref{eq:mixturemodel}. Whereas the aforementioned deep generative approaches may be very well-suited for estimating \textit{continuous} priors, neural-$g$ has greater flexibility to estimate \emph{non}-continuous priors such as atomic and piecewise continuous priors. Neural-$g$ can also estimate very smooth, continuous priors by setting the grid size $m$ for $\Theta_m$ in \eqref{eq:MLP-family} to be sufficiently large.

\section{Implementation of Neural-$g$}\label{sec:implement}

In this section, we provide details for implementing neural-$g$ given support points $\Theta_m = \{ \theta_1, \ldots, \theta_m \}$. We first describe our default neural network architecture for neural-$g$. We then introduce a variant of the SGD algorithm for optimizing \eqref{eq:net-npmle-phi}.




Theorem \ref{theo:net-npmle} suggests that a network $g_{\phi} \in \mathcal{G}_m$ with a single hidden layer can learn any PMF on $\Theta_m$ arbitrarily well. However, in practice, the number of hidden neurons in a shallow network may need to be unfeasibly large in order to learn the target PMF well \citep{GoodfellowDeep2016, Mhaskar2017}. \textit{Deep neural networks} (DNNs), or networks with at least two hidden layers, often have better empirical performance than shallow networks and typically require much fewer network parameters to achieve comparable accuracy \citep{zagoruyko2016wide, he2016deep, Mhaskar2017}. 

To allow neural-$g$ to capture more complex patterns in the data with lower sample complexity, we set our default $g_{\phi}$ in \eqref{eq:MLP-family} to consist of $L=4$ hidden layers (i.e. total depth $K=5$) and $h=500$ neurons per hidden layer. However, in our sensitivity analysis in Appendix \ref{sec:sensitivity}, we found that the performance of neural-$g$ was fairly robust to different choices of $(L, h)$, with only modestly worse performance when $L=1$. For the hidden layers of $g_{\phi}$, we use ReLU activation functions $\sigma(x)=\max(0, x)$. ReLU is much more computationally efficient and less likely to lead to numerical problems like vanishing gradients than other activation functions such as sigmoid or hyperbolic tangent \citep{nair2010rectified}.

SGD is the standard approach for optimizing DNNs \citep{robbins1951stochastic, EmmertStreib2020FrontiersinAI}. Under SGD, a minibatch of size $S \ll n$ is randomly sampled from the training data, and the average of the $S$ individual gradients is used to approximate the exact gradient in the gradient descent algorithm \citep{Curry1944TheMO}. Typically, the full data is randomly divided into $B = \lceil n/S \rceil$ disjoint minibatches. SGD completes a full pass through all $B$ batches (or an \textit{epoch}) by updating the network parameters using one of the batches in each iteration of SGD. Training typically terminates after $E$ epochs have completed, where $E$ is sufficiently large.



One of the issues with SGD is that convergence can be slow. This is because SGD always moves in approximately the direction of steepest descent, which is not ideal if the loss surface is actually very flat. In this case, steep gradients slow down the movement towards a local minimum. The NPMLE loss function \eqref{eq:kwnpmle_sol} often exhibits a flat likelihood surface with multiple local maxima \citep{WangJRSSB2007}, and the same is true for the neural-$g$ loss function \eqref{eq:net-npmle-phi} (where the weights $\bm{w}$ in \eqref{eq:kwnpmle_sol} are replaced with $g_{\phi}(\theta_j)$). This problem is further compounded by the use of the softmax function in the output layer of $g_{\phi}$ for neural-$g$. It has been rigorously proven that optimizing log-probabilities under softmax transformation must exhibit slow convergence for gradient descent-based algorithms due to long suboptimal plateaus \citep{MeiNeurIPS2020}.

To accelerate the convergence of DNN training for neural-$g$, we propose a weighted average gradient (WAG) approach. Let the randomly shuffled data $\bm{y}$ be denoted as $\bm{y}^* = \{ \bm{y}_1^*,\ldots, \bm{y}_B^* \}$ which is partitioned into $B = \lceil n/s \rceil$ minibatches.
In each $t$th iteration of WAG, the update for $\phi$ is given by
\begin{equation}\label{eq:avg-grad}
    \phi^{(t)} \leftarrow \phi^{(t-1)} - {\eta^{(t)}}\bigg(wS^{-1}\nabla\ell(\bm{y}^*_b, \phi^{(t-1)})+ (1-w)S^{-1}(t-2)^{-1}\textstyle{\sum_{k=1}^{t-2}} \nabla\ell(\bm{y}_{b^{(k)}}^*, \phi^{(k)}) \bigg),
\end{equation}
where $\eta^{(t)}$ is an adaptively tuned step size, and $0\leq w \leq 1$ represents a weight parameter that assigns weight $w$ to the current batch gradient $S^{-1} \nabla\ell(\bm{y}^*_b, \phi^{(t-1)}) = S^{-1} \textstyle{\sum_{i \in \bm{y}_b^*}}\nabla\ell(y_i, \phi^{(t-1)})$ and weight $(1-w)$ to the total average of past gradients from \emph{all} previous batches and epochs $S^{-1}(t-2)^{-1}\textstyle{\sum_{k=1}^{t-2}} \nabla\ell(\bm{y}_{b^{(k)}}^*, \phi^{(k)})$, where $\bm{y}_{b^{(k)}}^*$ denotes the batch used to update $\phi$ in iteration $k$. 

We found that for neural-$g$, WAG converged faster than alternative optimizers such as SGD with momentum \citep{qian1999momentum} or the Adam optimizer \citep{kingma2014adam}. These other optimizers also employ a weighted average but replace the total past average in \eqref{eq:avg-grad} with the most recent iterate $\phi^{(t-1)}$. By averaging over both previous batch gradients and previous epochs in each update of $\phi$, WAG gives even greater weight to all past gradients than momentum or Adam. This stabilizes neural-$g$ training by making neural-$g$ even more predisposed towards moving in the same direction as past gradients.
 
In all of our synthetic and real data examples, we set the weight parameter $w$ to be 0.6 in \eqref{eq:avg-grad} by default and the maximum number of training epochs $E$ to be $8000$. The adaptive step size was set as $\eta^{(t)} = {\eta}{t^{-0.2}}$ where $\eta = 0.0003$. To avoid needing to run WAG for all $E$ epochs, we adopted a commonly used stopping rule where we compared the loss value at the $t$th iteration $\ell^{(t)} = - n^{-1} \sum_{i=1}^{n} \log \{ \sum_{j=1}^{m} f(y_i \mid \theta_j) g_{\phi^{(t)}} (\theta_j) \}$ to the loss value from $c < t$ iterations ago.  Training terminates if $\vert \ell^{(t)}-\ell^{(t-c)}\vert < \epsilon$ for some small $\epsilon > 0$, i.e. there has been no significant improvement in the loss \eqref{eq:net-npmle-phi} after $c$ iterations. For neural-$g$, we set default values of $\epsilon = 0.01$ and $c=10$. 

\RestyleAlgo{ruled}
\begin{algorithm}[t!]
\SetKwInput{KwData}{Input}
\SetKwInput{KwResult}{Output}
\caption{{Training neural-$g$}}\label{alg:net-npmle}
\vspace{0.1cm}
\KwData{\small{Data $\bm{y} = (y_1, \ldots, y_n)$, discrete support $\Theta_m = \{ \theta_1, \ldots, \theta_m \}$}, maximum number of epochs $E$, batch size $S$, weight $w$, stopping criteria  $\epsilon$ and $c$}
\vspace{0.1cm}
\KwResult{\small{Network parameters $\phi$ for neural-$g$}}
\vspace{0.3cm}

$B = \lceil n/S \rceil$

$t = 0$
\vspace{.3cm}

\textbf{Randomly initialize} $\phi^{(0)}$
\vspace{.3cm}

    \For{$e$ in $1,\ldots,E$}{
    \vspace{0.1cm}
    \renewcommand{\labelitemi}{$\vcenter{\hbox{\small$\bullet$}}$}
    \begin{itemize}[leftmargin=*]
        \item     \textbf{\small{Randomly shuffle}} dataset {\small $\bm{y}$} as {\small $\bm{y}^*$} and partition {\small $\bm{y}^*$} into {\small $\{\bm{y}_1^*,\ldots, \bm{y}_B^* \}$}
    \end{itemize}
    \vspace{-0.2cm}
    \For{$b$ in $1, \ldots, B$}{
    \vspace{-.2cm}
    
    \begin{itemize}[leftmargin=*]
    \item \textbf{\small{Iterate}} $t= t + 1$
    \end{itemize}    
    \vspace{-.8cm}
    
    \begin{itemize}[leftmargin=*]
        \item         \textbf{\small{Evaluate}} approximate gradient of neural-$g$ loss \eqref{eq:net-npmle-phi} as
    \end{itemize}
    \vspace{-0.1cm}
    \begin{equation*}
    S^{-1} \nabla \ell(\bm{y}^*_b,\phi^{(t-1)}) = S^{-1} \nabla_{\phi^{(t-1)}} \big\{ \textstyle \sum_{i \in \bm{y}^*_b}\mathrm{log} \big[ \textstyle \sum_{j=1}^m f(y_i \mid \theta_j) g_{\phi^{(t-1)}}(\theta_j) \big] \big\}
    \end{equation*}
    \begin{itemize}[leftmargin=*]
    \item         \textbf{\small{Update}} network parameters $\phi$ according to gradient averaging \eqref{eq:avg-grad} as
    \end{itemize}
    \vspace{0.1cm}
    \begin{equation*}
    \phi^{(t)} \leftarrow \phi^{(t-1)}-  \eta^{(t)} \left[ wS^{-1}\nabla\ell(\bm{y}^*_b, \phi^{(t-1)})+ (1-w)S^{-1}(t-2)^{-1}\textstyle{\sum_{k=1}^{t-2}} \nabla\ell(\bm{y}_{b^{(k)}}^*, \phi^{(k)}) \right]
    \end{equation*}
    \begin{itemize}[leftmargin=*]
    \item \textbf{Evaluate} loss $\ell^{(t)}$ as
    \begin{equation*}
        \ell^{(t)} = - \frac{1}{n} \sum_{i=1}^{n} \log \left\{ \sum_{j=1}^{m} f(y_i \mid \theta_j) g_{\phi^{(t)}} (\theta_j) \right\}
    \end{equation*}
    \end{itemize}
    \vspace{-.3cm}
    
    \If{$t>c$ \textbf{and} $\vert \ell^{(t)}-\ell^{(t-c)}\vert < \epsilon$}{
    \vspace{0.1cm}
    \textbf{stop}
    }
}
}
\end{algorithm}

A detailed summary of the neural-$g$ training algorithm is provided in Algorithm \ref{alg:net-npmle}. Once the estimated network parameters $\widehat{\phi}$ in \eqref{eq:net-npmle-phi} have been obtained by Algorithm \ref{alg:net-npmle}, the final neural-$g$ density estimator is taken to be $\widehat{\pi}_{\text{neural-}g} = \sum_{j=1}^{m} g_{\widehat{\phi}}(\theta_j) \delta_{\theta_j}$.


\section{Synthetic Illustrations}\label{sec:simulate}

\subsection{Estimation Performance}

As discussed in Section \ref{sec:intro}, many commonly used priors of specific practical interest, such as piecewise constant priors and heavy-tailed priors, are particularly challenging for existing $g$-modeling approaches to capture well. By optimizing over a very flexible family of MLPs \eqref{eq:MLP-family}, neural-$g$ overcomes these limitations. In this section, we demonstrate the estimation capabilities of neural-$g$ through several simulated examples.  Code for implementing neural-$g$ is available at \url{https://github.com/shijiew97/neuralG}.  

In all of our simulations, we simulated $n=4000$ observations $y_1, \ldots, y_n$ from different mixture models \eqref{eq:mixturemodel}. We compared neural-$g$ with NPMLE and Efron's $g$, which were implemented using \textsf{R} packages \texttt{REBayes} \citep{KoenkerGu2017} and \texttt{deconvolveR} \citep{narasimhan2020deconvolver} respectively. For Efron's $g$, we considered degrees of freedom $p \in \{ 5, 20 \}$, denoted as Efron(5) and Efron(20) respectively, and a default choice of $\lambda = 1$ for the penalty parameter as recommended by \cite{efron2016empirical}. To estimate $\pi$, we specified $\Theta_m$ as an equispaced grid of $m=100$ points from $y_{\min}$ and $y_{\max}$. We also compared these estimators to the deconvolution estimator in the \textsf{R} package \texttt{decon} \citep{wang2011deconvolution}, which we denote as WW-decon. 

We considered the following six simulation settings:
\begin{enumerate}[label=\Roman*.]
    \item \textbf{Uniform prior}: $y \mid \theta \sim \mathcal{N}(\theta, 1),~ \theta \sim \mathcal{U}(-2, 2)$;
    \item \textbf{Piecewise constant prior}: $y \mid \theta \sim \mathcal{N}(\theta, 1),~ p(\theta) = 0.4~\mathcal{U}(-2,-1) + 0.2~\mathcal{U}(-1,1) + 0.4~\mathcal{U}(1,2)$;
    \item \textbf{Heavy-tailed prior}: $y \mid \theta \sim \mathcal{N}(\theta, 1),~ \theta \sim \text{Gumbel}(2,1)$; 
    \item \textbf{Bounded prior}: $y \mid \theta \sim \text{Log-normal}(\theta, 0.2),~ \theta \sim \text{Beta}(3,2)$;
 \item \textbf{Point mass prior}: $y \mid \theta \sim \mathcal{N}(\theta, 0.5),~ \theta = -5 \text{ w.p. } 0.3,  \theta = 0 \text{ w.p. } 0.4, \theta = 5 \text{ w.p. } 0.3.$;
    \item \textbf{Gaussian prior}: $y \mid \theta \sim \mathcal{N}(\theta, 1),~ \theta \sim \mathcal{N}(0,1).$
\end{enumerate}
Settings I-III are particularly challenging for NPMLE, Efron's $g$, and WW-decon to capture well. In setting IV, the parameter $\theta$ can theoretically take any real value; however, the true prior $\pi(\theta)$ is bounded on (0,1), and we wish to assess whether different methods are able to detect this boundedness. Finally, settings V-VI are the same as those presented in Section \ref{sec:motiv}. Simulation V is intended to provide a setting that is fair to the NPMLE, while Simulation VI provides a setting where we expect Efron's $g$ (with properly tuned smoothing parameters) to perform well. It is worth stressing again that the $g$-modeling approaches all actually estimate a PMF rather than a true continuous density. However, with sufficiently large $m$, the estimated prior $\widehat{\pi}$ can still be very smooth for neural-$g$ and Efron's $g$, and thus, they can still approximate continuous priors well.

We repeated Simulations I-VI for 20 replications and recorded the following metrics averaged across the 20 replicates. First, we compared the discrepancy between the true CDF and the estimated CDF for the different estimators using the Wasserstein-1 metric. Let $\widehat{\pi}$ denote an estimate of the mixing density $\pi$ in \eqref{eq:mixturemodel}, and let $\mathbb{F}_{\widehat{\pi}}$ and $\mathbb{F}_{\pi}$ denote their corresponding CDFs. The Wasserstein-1 metric is defined as
\begin{equation}\label{eq:wass-1}
    W_1(\pi,\widehat{\pi}) = \int_{\Theta_m} \mid \mathbb{F}_{\pi}(x)-\mathbb{F}_{\widehat{\pi}}(x) \mid dx,
\end{equation}
with a smaller $W_1(\pi, \widehat{\pi})$ indicating better estimation of $\pi$.

\renewcommand{\arraystretch}{1.1}
\begin{table}[t]
\centering
\caption{Results for Simulations I-VI averaged across 20 replications. For the heavy-tailed prior (Simulation IV), results are not shown for NPMLE or WW-decon because the \textsf{R} packages \texttt{REBayes} and \texttt{decon} do not support the log-normal density.}
\resizebox{1.0\columnwidth}{!}{%
\begin{tabular}{c|cc|cc|cc|cc|cc|}
\hline
&\multicolumn{2}{c|}{neural-$g$}&\multicolumn{2}{c|}{NPMLE}&\multicolumn{2}{c|}{Efron(5)}&\multicolumn{2}{c|}{Efron(20)}&\multicolumn{2}{c|}{WW-decon}\\
\hline
Prior $\pi$&$W_1(\pi,\widehat{\pi})$&$\text{MAE}$&$W_1(\pi,\widehat{\pi})$&$\text{MAE}$&$W_1(\pi,\widehat{\pi})$&$\text{MAE}$&$W_1(\pi,\widehat{\pi})$&$\text{MAE}$&$W_1(\pi,\widehat{\pi})$&$\text{MAE}$
\\
\hline
\hline
Uniform & \textbf{0.040} & \textbf{0.014} & 0.342 & 0.044 & 0.164 & 0.113 & 0.095 & 0.036 & 0.144 & 0.089 \\
Piecewise constant & \textbf{0.050} & \textbf{0.032} & 0.425 & 0.364 & 0.160 & 0.097 & 0.364 & 0.336 & 0.108 & 0.045 \\
Heavy-tailed & \textbf{0.062} & \textbf{0.023} & 0.329 & 0.038 & 0.104 & 0.067 & 0.353 & 0.107 & 0.259 & 0.139 \\
Bounded & \textbf{0.024} & \textbf{0.003} & -- & -- & 0.268 & 0.064 & 1.28 & 0.048 & -- & -- \\
Point mass & \textbf{4.51} & 0.02 & \textbf{4.51} & \textbf{5e-4} & 4.71 & 0.400 & 4.54 & 0.03 & 4.62 & 0.250 \\
Gaussian & \textbf{0.047} & \textbf{0.017} & 0.315 & 0.044 & 0.069 & 0.026 & 0.084 & 0.026 & 0.144 & 0.245 \\
\hline
\end{tabular}}
\label{tab:m1-m3}
\end{table}

Since the Bayes estimate, or the posterior expectation $\mathbb{E}_{\pi}(\theta \mid y)$ under prior $\pi$, is typically of central interest in empirical Bayes analysis, we also assessed the quality of the estimated posterior means. To do this, we computed the pointwise posterior expectations $\mathbb{E}_{\widehat{\pi}}(\theta_i \mid y_i)$ under the estimated priors $\widehat{\pi}$ at each observed sample $y_i, i = 1, \ldots, n$, using Bayes rule,
\begin{equation}\label{eq:posterior-mean}
    \widehat{E}_i = \mathbb{E}_{\widehat{\pi}} \big ( \theta_i \mid y_i\big) = \frac{\sum_{j=1}^m \theta_j f(y_i\mid \theta_j)\widehat{\pi}(\theta_j)}{\sum_{j=1}^m f(y_i\mid \theta_j)\widehat{\pi}(\theta_j)},~~ i = 1,\dots, n.
\end{equation}
Given the true prior $\pi$, it is easy to attain the \emph{true} pointwise posterior expectations as $E_{i, \text{true}} = \mathbb{E}_{\pi} (\theta_i \mid y_i) = \int_{\Theta} \theta_i \pi(\theta_i \mid y_i) d \theta_i$. Consequently, we calculated the mean absolute error (MAE) between the $\widehat{E}_i$'s and $E_{i, \text{true}}$'s as
\begin{equation} \label{eq:MAE-1}
\text{MAE} = \frac{1}{n} \sum_{i=1}^n \vert \widehat{E}_i-E_{i, \text{true}}\vert.
\end{equation}
Table \ref{tab:m1-m3} reports our results for the metrics \eqref{eq:wass-1} and \eqref{eq:MAE-1} averaged across 20 replications. Table \ref{tab:m1-m3} shows that the estimated CDF under neural-$g$ was on average the closest to the true CDF. In Simulation V (point mass prior), neural-$g$ tied with the NPMLE for smallest average $W_1(\pi, \widehat{\pi})$. This indicates neural-$g$'s capability to estimate fully discrete priors, in addition to continuous or piecewise continuous densities. The posterior expectations under neural-$g$ also had the smallest average MAE  in all simulation settings except for Simulation V where it performed the second best behind NPMLE. It is not surprising that when the true prior is a discrete mixture with only three atoms, the NPMLE performs the best. However, in all other simulation settings, neural-$g$ vastly outperformed NPMLE, and even in Simulation V, neural-$g$ was competitive with NPMLE. In short, the results in Table \ref{tab:m1-m3} show very encouraging performance for neural-$g$. 

\begin{figure}[t]
    \centering
    \includegraphics[width=1.0\textwidth]{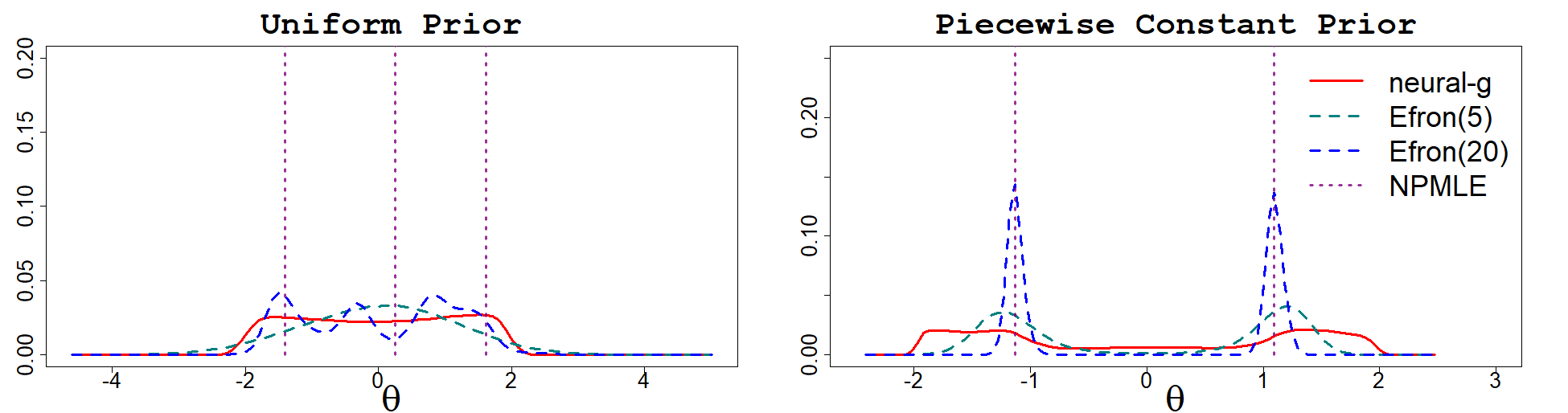}
\caption{\small Left panel: Estimated priors from one replication of Simulation I (Uniform prior). Right panel: Estimated priors from one replication of Simulation II (Piecewise constant prior).}\label{fig:m1-m6}
\end{figure}


Figure \ref{fig:m1-m6} plots the results from one replication of Simulation I (left panel) and one replication of Simulation II (right panel). We observe that neural-$g$ is able to capture the flat regions for both the uniform prior (Simulation I) and the piecewise constant prior (Simulation II). Meanwhile, Efron's $g$ is unable to estimate the flat parts of these prior densities, regardless of the degrees of freedom. In the piecewise constant example (right panel of Figure \ref{fig:m1-m6}), the NPMLE only estimates two atoms, despite there being three nonzero segments over the domain $[-2, 2]$. 

Figure \ref{fig:m3} plots the results from one replication of Simulation III. In this case, the true prior is a Gumbel density which is unimodal, skewed right, and has a fat upper tail. Only neural-$g$ is able to capture all aspects of the true prior. In contrast, NPMLE does not estimate \emph{any} mass in the upper tail. Meanwhile, Efron(5) has a right tail that is too thin, while Efron(20) is multimodal.

\begin{figure}[t]
    \centering
    \includegraphics[width=1.0\textwidth]{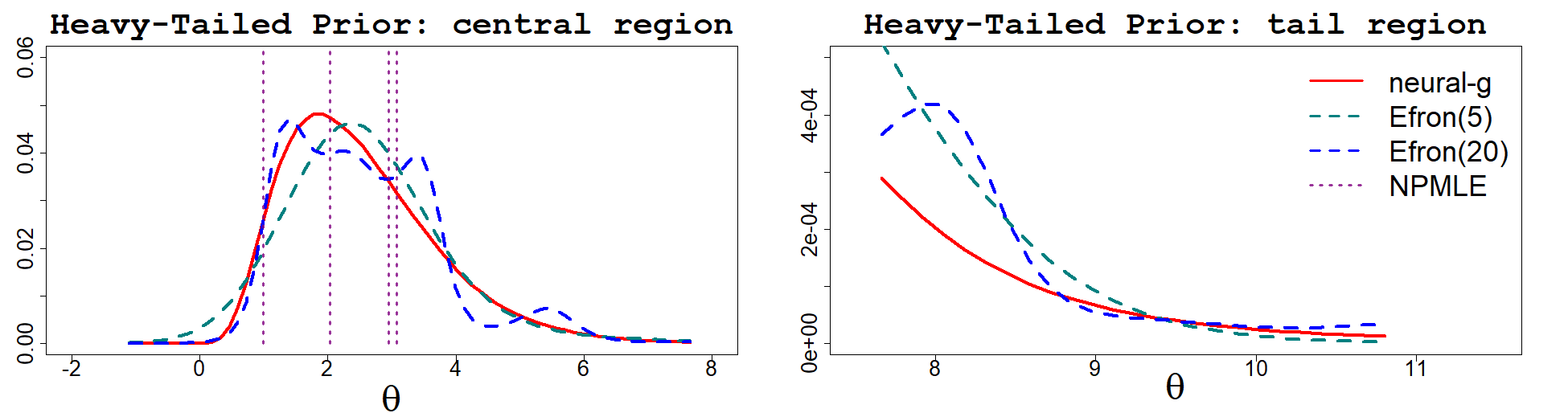}
\caption{\small Estimated priors from one replication of Simulation III (Heavy-tailed prior). Left panel: The estimated densities around the mode. Right panel: The upper tails of the estimated densities.}\label{fig:m3}
\end{figure}

Figure \ref{fig:m4-m5} plots the results from one replication of Simulation IV (left panel) and one replication of Simulation V (right panel). The left panel of Figure \ref{fig:m4-m5} shows that neural-$g$ places practically all of its mass in the correct support $(0,1)$ and correctly infers the left-skewness of the true Beta($3,2$) prior (Simulation IV). In contrast, both Efron(5) and Efron(20) estimate \textit{right}-skewed densities with substantial probability mass when $\theta > 1$, contrary to the ground truth where $\Pi(\theta > 1) =0$. This example showcases neural-$g$'s ability to adapt to a challenging scenario where the true prior actually has a bounded support. Finally, the right panel of Figure \ref{fig:m4-m5} shows unsurprisingly that NPMLE estimates the point mass prior (Simulation V) very accurately. However, neural-$g$ \textit{also} produces a fairly accurate estimate that is very close to the NPMLE, whereas Efron(5) is oversmoothed. 

\begin{figure}[t]
    \centering
    \includegraphics[width=1.0\textwidth]{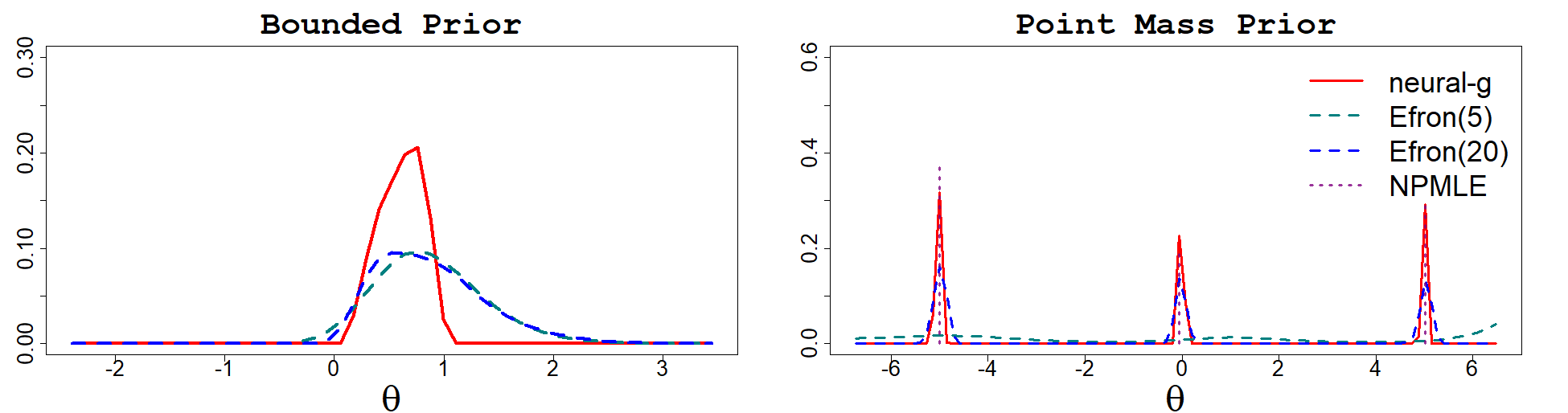}
\caption{Left panel: Estimated priors from one replication of Simulation IV (Bounded prior). Right panel: Estimated priors from one replication of Simulation V (Point mass prior). In the left panel, NPMLE is not plotted because the \texttt{REbayes} package does not support mixture models with a log-normal density.}\label{fig:m4-m5}
\end{figure}

\subsection{Uncertainty Quantification Performance}

\cite{koenker2020empirical} observed that posterior credible intervals constructed under the NPMLE often suffer from undercoverage due to NPMLE's intervals being too narrow. In contrast, \cite{koenker2020empirical} noted that the coverage properties for credible intervals under Efron's $g$ were often satisfactory. To investigate whether neural-$g$ produces acceptable posterior credible intervals, we compared the empirical coverage probabilities (ECPs) of the pointwise 95\% posterior credible intervals under neural-$g$ to those of NPMLE and Efron's $g$ with 20 degrees of freedom. Following \cite{koenker2020empirical}, we first generated $n$ realizations of $\theta_i^{\text{true}} \sim \pi(\theta)$ and $y_i \mid \theta_i^{\text{true}} \sim f(y_i \mid \theta_i^{\text{true}})$, $i = 1, \ldots, n$. We then estimated the posteriors $\widehat{\pi}(\theta_i \mid y_i) = [ \sum_{j=1}^{m} f(y_i \mid \theta_j) \widehat{\pi}(\theta_j) \delta_{\theta_j} ] / [ \sum_{j=1}^{m} f(y_i \mid \theta_j) \widehat{\pi}(\theta_j) ], i = 1, \ldots, n$, and used the 2.5th and 97.5th quantiles of $\widehat{\pi}(\theta_i \mid y_i)$ as the endpoints for our posterior credible intervals. To compute the ECP, we took
\begin{align} \label{ECP}
    \text{ECP} = \frac{1}{n} \sum_{i=1}^{n} \mathbb{I} \big\{ q_{0.025} (\theta_i ) \leq \theta_i^{\text{true}} \leq q_{0.975} (\theta_i ) \big\},
\end{align}
where $q_{0.025}(\theta_i)$ and $q_{0.975}(\theta_i )$ denote the 2.5th and 97.5th quantiles of $\widehat{\pi}(\theta_i \mid y_i)$ respectively.

We considered $n \in \{ 100, 200, 500, 1000 \}$ and repeated this exercise for 100 replicates each of Simulations I-VI. In Table \ref{tab:sim-cov}, we report the average ECPs across these 100 replications, along with the average widths for the 95\% posterior credible intervals in parentheses. We observed the same phenomenon as \cite{koenker2020empirical}, where the average ECPs for the NPMLE were well below the nominal level of 0.95, and the average interval lengths were much narrower than those of neural-$g$ or Efron's $g$. 

Table \ref{tab:sim-cov} shows that as the sample size $n$ increased, neural-$g$ had improved coverage in all simulation settings. Other than Simulation V (Point mass prior), neural-$g$ also had coverage which was close to the nominal level of 0.95 when $n \in \{ 500, 1000 \}$. In Simulation V, Efron(20) had nearly perfect coverage but its credible intervals were quite conservative compared to neural-$g$ and NPMLE. When the true prior is an atomic density with few atoms, we expect the posterior credible intervals to be narrower \citep{koenker2020empirical}, so they may fail to achieve the nominal level of coverage. For Simulation IV (Bounded prior), Efron(20) also had higher coverage than neural-$g$. However, as we saw in Figure \ref{fig:m4-m5}, Efron(20) erroneously estimated the Beta(3,2) prior as being a right-skewed density with substantial mass greater than one. This led to more conservative posterior credible intervals for Efron(20) under Simulation IV. On the other hand, neural-$g$ achieved close to the nominal level when $n \in \{500, 1000\}$ with tighter credible intervals in Simulation IV.



\renewcommand{\arraystretch}{1.5}
\begin{table}[t]
\centering
\caption{Empirical coverage rates for the pointwise 95\% posterior credible intervals in Simulations I-VI, averaged across 100 replicates. The average widths of the credible intervals are reported in parentheses. For the bounded prior (Simulation IV), results are not shown for NPMLE because the \texttt{REbayes} package does not support mixture models with a log-normal density.}
\begin{subtable}[t]{\linewidth}
\centering
\resizebox{1.0\columnwidth}{!}{%
\begin{tabular}{c|ccc|ccc|ccc|}
\hline
&\multicolumn{3}{c|}{{\large Uniform prior}}&\multicolumn{3}{c|}{{\large Piecewise constant prior}}&\multicolumn{3}{c|}{{\large Heavy Tailed prior}}\\
\hline
$n$&neural-g&NPMLE&Efron(20)&neural-g&NPMLE&Efron(20)&neural-g&NPMLE&Efron(20)
\\
\hline
\hline
100&0.905 (3.17)&0.647 (2.24)&\textbf{0.924 (3.29)}&\textbf{0.931 (2.64)}&0.616 (0.810)&0.873 (1.178)&0.891 (2.56)&0.670 (1.86)&\textbf{0.917 (2.69)}
\\
200&0.927 (3.25)&0.705 (2.39)&\textbf{0.933 (3.28)}&\textbf{0.950 (2.68)}&0.686 (0.907)&0.755 (0.992)&\textbf{0.916 (2.63)}&0.723 (2.00)&\textbf{0.916 (2.63)}
\\
500&\textbf{0.939 (3.33)}&0.760 (2.59)&0.935 (3.29)&\textbf{0.958 (2.69)}&0.726 (0.944)&0.588 (0.776)&\textbf{0.937 (2.72)}&0.772 (2.12)&0.919 (2.61)
\\
1000&\textbf{0.945 (3.37)}&0.797 (2.71)&0.938 (3.32)&\textbf{0.969 (2.69)}&0.767 (0.987)&0.476 (0.652)&\textbf{0.939 (2.73)}&0.795 (2.21)&0.920 (2.64)
\\
\hline
\end{tabular}}
\end{subtable}
\begin{subtable}[t]{\linewidth}
\vspace{0.2cm}
\centering
\resizebox{1.0\columnwidth}{!}{%
\begin{tabular}{c|ccc|ccc|ccc|}
\hline
&\multicolumn{3}{c|}{{\large Bounded prior}}&\multicolumn{3}{c|}{{\large Point mass prior}}&\multicolumn{3}{c|}{{\large Gaussian prior}}\\
\hline
$n$&neural-g&NPMLE&Efron(20)&neural-g&NPMLE&Efron(20)&neural-g&NPMLE&Efron(20)
\\
\hline
\hline
100&0.875 (0.498) &-- & \textbf{0.951 (0.675)} &0.644 (0.474)&0.533 (0.286)&\textbf{0.997 (0.976)}&0.891 (2.56)&\textbf{0.917 (2.69)}&0.670 (1.86)
\\
200&0.908 (0.515) &-- & \textbf{0.955 (0.680)} &0.695 (0.462)&0.592 (0.271)&\textbf{0.999 (0.828)}&\textbf{0.916 (2.63)}&0.723 (2.00)&\textbf{0.916 (2.63)}
\\
500&0.927 (0.525) &-- & \textbf{0.959 (0.682)}&0.807 (0.417)&0.763 (0.231)&\textbf{1.000 (0.663)}&\textbf{0.937 (2.72)}&0.772 (2.12)&0.919 (2.61)
\\
1000&0.935 (0.530) &-- & \textbf{0.959 (0.682)}&0.799 (0.416)&0.801 (0.217)&\textbf{1.000 (0.585)}&\textbf{0.939 (2.73)}&0.795 (2.21)&0.920 (2.64)
\\
\hline
\end{tabular}}
\end{subtable}

\label{tab:sim-cov}
\end{table}



\section{Multivariate Neural-$g$}\label{sec:multivariate}

 In Section \ref{sec:neural-g}, we introduced neural-$g$ in the context of univariate mixture models \eqref{eq:mixturemodel}. However, \emph{multivariate} mixture models also arise in many practical applications \citep{WWStatComp2015}. Here, we observe $\bm{Y} = ( \bm{y}_1, \ldots, \bm{y}_n)$, where $\bm{y}_i \in \mathbb{R}^{q}, q \geq 2$, for $i = 1, \ldots, n$, and we assume
\begin{equation} \label{eq:multivariate-mixture}
\bm{y}_i \mid \bm{\theta}_i \sim f( \bm{y}_i \mid \bm{\theta}_i ),~~ \bm{\theta}_i \sim \pi(\bm{\theta}),~~~~ i = 1, \ldots, n,
\end{equation}
where $\pi(\bm{\theta})$ is a $d$-variate density function with $2 \leq d \leq q$, and $f$ is known but $\pi$ is unknown. Several authors have proposed multivariate extensions to NPMLE \citep{WWStatComp2015, feng2018approximate} for estimating the multivariate prior $\pi$ in \eqref{eq:multivariate-mixture}. \cite{WWStatComp2015} noted that extending univariate prior estimation to the multivariate case can be a nontrivial task for NPMLE. In contrast, extension to $d$-variate prior estimation, $d \geq 2$, is very straightforward for neural-$g$.



Similar to the univariate case, multivariate neural-$g$ requires specification of a finite grid of support points $\bm{\Theta}_m = \{ \bm{\theta}_1, \ldots, \bm{\theta}_m \}$, where $\bm{\theta}_j = (\theta_{j1}, \ldots, \theta_{jd})^\top \in \mathbb{R}^{d}, j = 1, \ldots, m$. However, in multi-dimensions, choosing an equispaced grid can become quickly infeasible. For multivariate NPMLE, \cite{feng2018approximate} instead provided a practical two-step algorithm for choosing a set of ``representative'' points for $\bm{\Theta}_m$, and we adopt their proposed procedure to select $\bm{\Theta}_m$. 
Briefly, $\bm{\Theta}_m$ is constructed inductively via a sequence of univariate conditional MLEs given the data $\bm{Y}$; see \cite{feng2018approximate} for a detailed discussion. Given $\bm{\Theta}_m$, we define the family of MLPs $\widetilde{\mathcal{G}}_{m}$ as $\{ g_{\bm{\phi}}: \mathbb{R}^{d} \mapsto [0,1]^1 ~\big|~ 0 \leq g_{\bm{\phi}}(\bm{\theta}_j) \leq 1 \text { for all } j = 1, \ldots, m, \textrm{ and } \sum_{j=1}^{m} g_{\bm{\phi}}(\bm{\theta}_j) = 1 \}$, where $\bm{\phi} \in \bm{\Phi}$ are the network parameters and $g_{\bm{\phi}}(\bm{\theta}_j) = \mathbb{P}(\bm{\theta} = \bm{\theta}_j)$.

The multivariate neural-$g$ estimator is obtained by minimizing the following loss function with respect to $\bm{\phi}$,
\begin{equation}\label{eq:net-npmle-phi-d}
    \widehat{\bm{\phi}}  = \underset{\bm{\phi} \in \Phi}{\mathrm{argmin}} - \frac{1}{n} \sum_{i=1}^n\mathrm{log}\bigg\{ \sum_{j=1}^m f (\bm{y}_i \mid \bm{\theta}_j)  g_{\bm{\phi}}(\bm{\theta}_j) \bigg\}.
\end{equation}
Given $\widehat{\bm{\phi}}$ in \eqref{eq:net-npmle-phi-d}, the multivariate neural-$g$ etimator is then $\widehat{\pi}_{\text{neural-}g} = g_{\widehat{\bm{\phi}}}(\bm{\Theta}_m) = \sum_{j=1}^{m} g_{\widehat{\bm{\phi}}} (\bm{\theta}_j) \delta_{\bm{\theta}_j}$, where $g_{\widehat{\bm{\phi}}} (\bm{\theta}_j) = \widehat{\mathbb{P}} (\bm{\theta} = \bm{\theta}_j), j = 1, \ldots, m$. Whereas other $g$-modeling methods often require nontrivial changes to existing algorithms to accommodate multivariate prior estimation \citep{WWStatComp2015}, we can simply use Algorithm \ref{alg:net-npmle} to solve \eqref{eq:net-npmle-phi}.



\begin{figure}[t]
    \centering
    \includegraphics[width=1.0\textwidth]{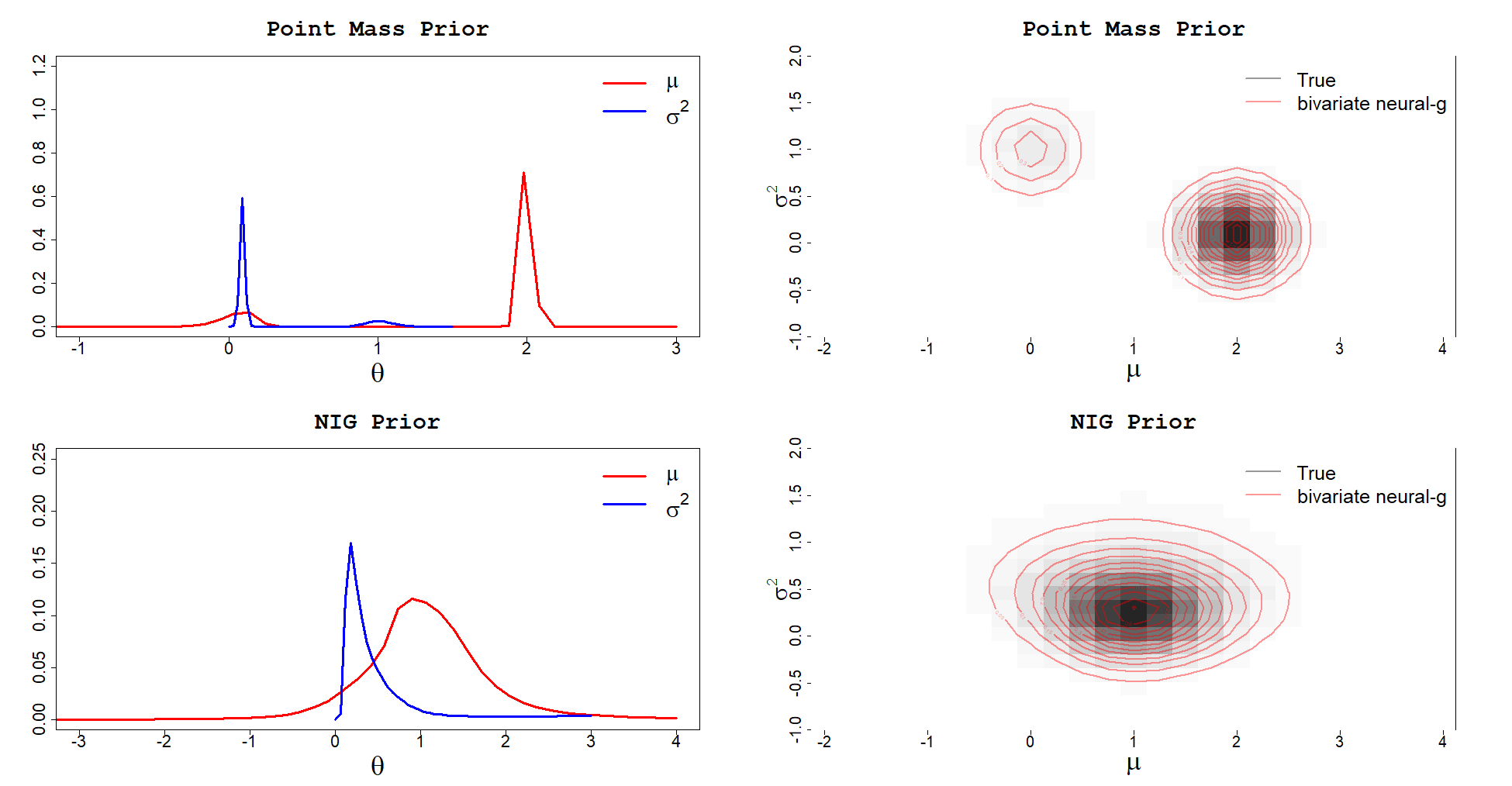}
\caption{Left two panels: Plots of the estimated marginal densities for $\mu$ and $\sigma^2$ under two different Gaussian location-scale mixture models. Right two panels: Heatmaps and contour plots from 5000 samples of the estimated joint density $\widehat{\pi}_{\text{neural-}g}(\mu, \sigma^2)$ vs. the true $\pi(\mu, \sigma^2)$.} \label{fig:bivari}
\end{figure}

To illustrate multivariate neural-$g$, we consider the Gaussian location-scale model studied by \cite{gu2017unobserved} and \cite{feng2018approximate}. Here, we have $\bm{\theta} = (\mu, \sigma^2) \in \mathbb{R}^{2}$ in \eqref{eq:multivariate-mixture}, and our interest lies in estimating the unknown bivariate prior density $\pi(\mu, \sigma^2)$. We simulated $n = 1000$ observations $\bm{y}_1, \ldots, \bm{y}_n$, where $\bm{y}_i = (y_{i1}, y_{i2})^\top$ for $i = 1, \ldots, n$, and 
\begin{equation}\label{eq:location-scale}
    y_{i1}, y_{i2} \mid (\mu_i,\sigma^2_i) \overset{i.i.d}{\sim} \mathcal{N}(\mu_i,\sigma^2_i),~~ (\mu_i, \sigma_i^2) \sim \pi(\mu, \sigma^2), ~~~~i = 1,\ldots, n.
\end{equation}
We considered the following two simulation settings for $\pi(\mu, \sigma^2)$:
\begin{itemize}
    \item \textbf{Discrete point mass prior}: $\pi(\mu,\sigma^2) \sim 0.2\delta_{\mu_1=0,\sigma_1^2=1.0}+0.8\delta_{\mu_2=2,\sigma_2^2=0.1}$. 
    \item \textbf{Normal-inverse-gamma (NIG) prior}: $\pi(\mu,\sigma^2) \sim \text{NIG}(\mu=1,\sigma=1,\text{shape}=2,\text{scale}=0.5)$.
\end{itemize}
The first setting is the example from \cite{feng2018approximate} where there is a discrete \emph{dependent} structure between $(\mu,\sigma^2)$.  Meanwhile,
the second setting has a continuous joint density for $\pi(\mu,\sigma^2)$. We used the algorithm in \cite{feng2018approximate} to first select the support grid $\bm{\Theta}_m$ with $m=50$. We then fit a bivariate neural-$g$  model to the data by solving \eqref{eq:net-npmle-phi-d}.



Our results are plotted in Figure \ref{fig:bivari}. The top two panels of Figure \ref{fig:bivari} plot the results for the discrete point mass prior. As expected, the estimated marginal densities of $\mu$ and $\sigma$ (upper left panel) under bivariate neural-$g$ exhibit discrete peaks at the point masses. The upper right panel of Figure \ref{fig:bivari} plots the contours for the estimated $\widehat{\pi}_{\text{neural-}g}(\mu, \sigma^2)$ against the heatmap of the true $\pi(\mu, \sigma^2)$ based on $5000$ samples. It is obvious that the bivariate neural-$g$ estimator matches the true bivariate density for $(\mu, \sigma^2)$ very well.

Results for the NIG prior are depicted in the two bottom panels of Figure \ref{fig:bivari}. The bottom left panel of Figure \ref{fig:bivari} clearly shows that the estimated marginal density for $\mu$ has a Gaussian shape centered at one, while the estimated marginal density for $\sigma^2$ has an inverse-gamma shape. The right panel of Figure \ref{fig:bivari} also shows that bivariate neural-$g$ accurately captures the true NIG density for $(\mu, \sigma^2)$. These examples show that multivariate neural-$g$ accurately captures both discrete \emph{and} continuous multivariate mixing densities where there is dependence between the elements of $\bm{\theta}$.

\section{Real Data Applications}\label{sec:real}


\subsection{Poisson Mixture Examples}\label{sec:surg}
The Poisson mixture model,
\begin{equation}\label{eq:poisson}
y_i\mid\lambda_i \sim \text{Poisson}(\lambda_i),~~
\lambda_i\sim \pi(\lambda),~~~~ i = 1, \ldots, n,
\vspace{-0.4cm}
\end{equation}
has been frequently employed to analyze count data  \citep{Karlis2018}. In \eqref{eq:poisson}, the latent density $\pi(\lambda)$ for the rate parameter $\lambda$ is our main parameter of interest.
In this section, we use neural-$g$ to analyze two count datasets.  In these two real applications, a single Poisson model (i.e. $y_i \mid \lambda \overset{iid}{\sim} \text{Poisson}(\lambda), i = 1, \ldots, n$, where $\lambda >0$ is fixed) is inadequate for modeling the data due to both overdispersion and clustering (i.e. individual random effects). A Poisson mixture model \eqref{eq:poisson} allows for much more flexibility, improved prediction, and identification of latent clustering effects. 
 
The first dataset contains the number of mutations at different protein domain positions \citep{park2022poisson}. Protein mutations are thought to be a primary cause of many degenerative and genetic disorders, and thus, it is of clinical importance to study their underlying mechanisms. These data were collected by The Cancer Genome Atlas (TCGA) from a total of 5848 patients and mapped to different protein domain positions \citep{park2022poisson}. In our analysis, we focused on three protein domains: growth factors (cd00031, $n$=366), protein kinases (cd00180, $n$=871) and the RAS-Like GT-Pase family of genes (cd00882, $n$=762). A detailed description of these data can be found in \cite{gauran2018empirical}. 

We fit Poisson mixture models \eqref{eq:poisson} with neural-$g$, NPMLE, Efron(5), and Efron(20) (i.e. Efron's $g$ with five or 20 degrees of freedom) to the protein data. We chose $\Theta_m$ as an equispaced grid of $m=100$ points from $y_{\min}$ to $y_{\max}$. Figure \ref{fig:protein} plots the estimated priors for protein domain cd00031. Appendix \ref{sec:addl-figures} provides plots of the results for protein domains cd00180 and cd00882. For cd00031, NPMLE shows four peaks at count values of 2, 5, 16, and 23. In contrast, both Efron(5) and Efron(20) show only three peaks around 0, 5, and 20. Efron's $g$ shows a Gaussian shape on the interval [15,25]. Neural-$g$ also suggests three clusters with local peaks at around 2 and 5 but has a \emph{uniform} mixture component on the interval [15, 25]. In this example, neural-$g$ gives a different interpretation from both NPMLE and Efron's $g$.

In our second real data application, we modeled the length of hospital stays in Arizona \citep{hilbe2011negative}.
This dataset consists of Arizona Medicare data from the year 1995 and records the lengths of hospital stays (in number of days) for $n=1798$ patients who had a coronary artery bypass graft (CABG) heart procedure. 
It is noteworthy that around 22\% of the patients in this dataset had counts equal to two, indicating the existence of inflation at two. Based on this fact, a well-fitted count model \eqref{eq:poisson} should be able to capture this characteristic from the dataset.

We fit Poisson mixture models \eqref{eq:poisson} with neural-$g$, NPMLE, Efron(5), and Efron(20) to the Arizona Medicare dataset where $\Theta_m$ was once again an equispaced grid of $m=100$ points from $y_{\min}$ to $y_{\max}$. Our results are plotted in Figure \ref{fig:azdrg112}. We observe that the NPMLE suggests that the data comes from a two-component discrete mixture model with rate parameters concentrated at 2 and 9. On the other hand, Efron(5) suggests that the true prior is a Gaussian-like density and fails to capture the inflation at two. Meanwhile, Efron(20) estimates a prior resembling a mixture of two Gaussians centered at 2 and 10.  Interestingly, neural-$g$ provides another viewpoint about the underlying mixing density $\pi(\lambda)$. Neural-$g$ suggests that the $\pi(\lambda)$ is a mixture of a point mass at 2 and a Gaussian centered at around 8.5.


\begin{figure}[t]
    \centering
    \begin{subfigure}[b]{0.48\textwidth}
    \includegraphics[width=\textwidth]{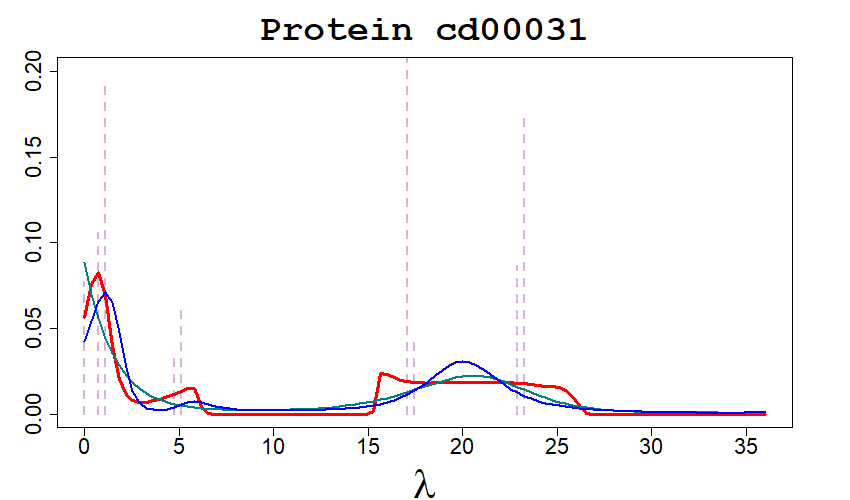}
    \caption{Protein cd00031 dataset}
    \label{fig:protein}
    \end{subfigure}
    \begin{subfigure}[b]{0.48\textwidth}
    \includegraphics[width=\textwidth]{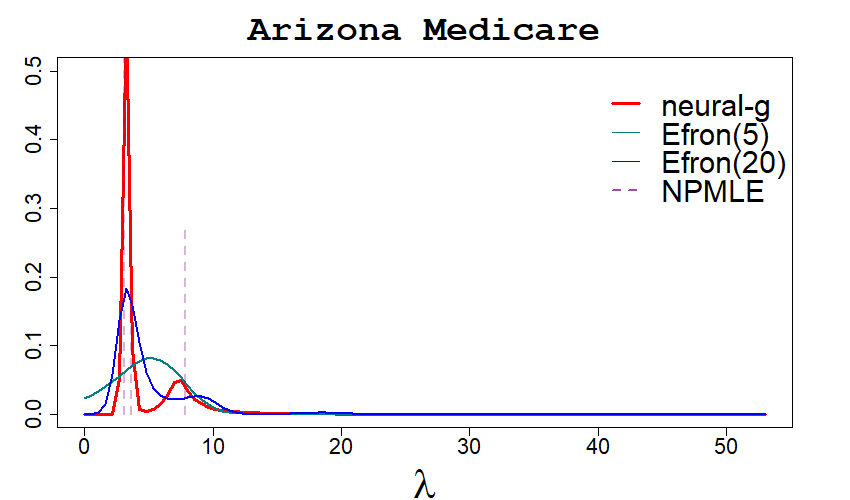}
    \caption{Arizona Medicare dataset}
    \label{fig:azdrg112}
    \end{subfigure}
\caption{Left panel: Estimated priors for the protein cd00031 dataset. Right panel: Estimated priors for the Arizona Medicare dataset.}
\end{figure}

To compare the model fits on these count datasets, we calculated $\chi^2$ statistics for comparing count frequencies. Specifically, we computed $\chi^2 = \sum_{c=1}^C|O_c-E_c|$ where $O_c$ is observed frequency while $E_c$ is expected frequency, defined as $E_c = n\sum_{j=1}^m f_y(c\mid \theta_j)\widehat{\pi}(\theta_j)$ for count $c$ under the estimated prior $\widehat{\pi}$ in \eqref{eq:poisson}. 
We used $K$-fold cross-validation (CV) with $K=10$ to evaluate the model fit, i.e. our metric was
\begin{equation} \label{eq:MAE} 
    \chi^2_{\text{MAE}} = \frac{1}{K} \sum_{k=1}^K\sum_{c_k=1}^{C_k} \bigg| O_{c_{k}}-n_k\sum_{j=1}^m f_y(c_k \mid \theta_j)\widehat{\pi}(\theta_j) \bigg|.
\end{equation}
To compare the predictive predictive power of the different methods, we also compared the out-of-sample predictive log-likelihood (PLL) from 10-fold CV as
\begin{equation} \label{eq:PLL}
    \text{PLL} = \frac{1}{K} \sum_{k=1}^K \sum_{I\in k}\sum_{j=1}^{m} - 2 \log f(y_{-I}; \theta_j \mid y_{I})\widehat{\pi}(\theta_j),
\end{equation}
where $y_{-I}$ is the held-out test data in the $k$th fold and $y_{I}$ denotes the rest of the data. In general, smaller $\chi^2_{\text{MAE}}$ and PLL suggest a better model fit.

Table \ref{tab:sim-cv-real} summarizes our two metrics \eqref{eq:MAE}-\eqref{eq:PLL} averaged across all 10 folds.
For the protein cd00031 dataset, Table \ref{tab:sim-cv-real} shows that the PLLs under four $\widehat{\pi}(\lambda)$ estimators were very similar; however, neural-$g$ achieved the lowest out-of-sample $\chi^2_{\text{MAE}}$. In addition, neural-$g$ had the lowest PLL in the Arizona Medicare dataset. One interesting thing to note is that different degrees of freedom led to greatly different model fits for Efron's $g$ in the Arizona Medicare dataset. This matches our observation in Figure \ref{fig:azdrg112}, which showed that Efron(5) estimated a unimodal prior centered around 5 (thereby missing the inflation at 2), whereas Efron(20) estimated a bimodal prior with a local mode centered at 2.


\renewcommand{\arraystretch}{1.5}
\begin{table}[t]
\centering
\caption{Results from our analyses of count datasets obtained using 10-fold CV.}
\label{tab:sim-cv-real}
\resizebox{0.9\columnwidth}{!}{%
\begin{tabular}{cc|cc|cc|cc|cc|cc}

\hline
&&\multicolumn{2}{c|}{neural-$g$}&\multicolumn{2}{c|}{{ NPMLE}}&\multicolumn{2}{c|}{{Efron(5)}}&\multicolumn{2}{c|}{{ Efron(20)}}\\
\hline
Dataset&$n$&$\chi^2_{\text{MAE}}$&PLL&$\chi^2_{\text{MAE}}$&PLL&$\chi^2_{\text{MAE}}$&PLL&$\chi^2_{\text{MAE}}$&PLL
\\
\hline
\hline
Protein cd00031&366&\textbf{18.27}&229.47&20.81&229.60&19.22&230.30&19.35&229.35
\\
Arizona Medicare&1789&49.54&\textbf{880.14}&50.74&880.21&48.47&903.17&50.35&887.76
\\
\hline
\end{tabular}}
\end{table}

\subsection{Measurement Error Example}\label{sec:framing}

Data collected from the real world are often contaminated by measurement errors. If we simply ignore the measurement error and treat the observations as if they were perfectly measured, this can greatly harm statistical inference. For example, we may fail to reveal the true underlying data density in density estimation \citep{efromovich1997density}, or hypothesis tests in regression analysis may be invalid \citep{hall2008measurement}. The additive measurement error model \citep{buonaccorsi2010measurement} is a classic model for handling this situation. 
Suppose that we observe contaminated data $\bm{y}= (y_1, \ldots, y_n)$. The additive measurement error model assumes that $\bm{y}$ were generated from the model,
\begin{equation*}
    y_i = \mu_i + \eta_i,~~~~ i = 1,\ldots, n,
\end{equation*}
where the $\mu_i$'s are the true (uncontaminated) values and the $\eta_i$'s are measurement errors. Assuming that $\mu_i \overset{iid}{\sim} \pi(\mu)$, the classic \textit{deconvolution} problem aims to estimate the density $\pi(\mu)$.

Here we revisit the Framingham Heart Study (FHS), one of the most famous investigations in cardiovascular disease epidemiology. See \cite{carroll2006measurement} for a detailed description of this study. The FHS measured the systolic blood pressure (SBP) for $n=1615$ patients during two examinations that were eight years apart (denoted as $\text{SBP}_1$ and $\text{SBP}_2$). In our analysis, we mainly focus on $\text{SBP}_2$. For both examinations, each $i$th individual patient had their blood pressure measured twice. We denote the two measurements of $\text{SBP}_2$ as $\bm{y}_i=(y_{i1}, y_{i2}), i=1, \ldots, 1615$.

We assume that $\text{SBP}_2$ follows an additive measurement error model, i.e.
\begin{equation} \label{eq:additive-2}
y_{ij} = \mu_i+\eta_i, ~~~~i = 1, \ldots, n,~~j=1, 2, 
\end{equation}
Our goal is to estimate the true density $\pi(\mu)$ of the \emph{uncontaminated} $\text{SBP}_2$.

We first followed the analysis of \cite{carroll2006measurement} and \cite{wang2011deconvolution} where the measurement errors $u_i$'s are assumed to be Gaussian and homogeneous.
By assuming homogeneity across patients, i.e. $\eta_i \overset{iid}{\sim} \mathcal{N}(0, \sigma^2)$, we have $y_{ij} \mid \mu_i \sim \mathcal{N}(\mu_i,\sigma^2)$ under model \eqref{eq:additive-2}. 
To estimate $\pi(\mu)$, we first took the average of the two measurements $(y_{i1}, y_{i2})$ so that $\overline{y}_i = 0.5(y_{i1}+y_{i2}) \sim \mathcal{N}(\mu_i,\sigma^2/2)$. Then an estimator of $\sigma^2$ could be obtained as
\begin{equation}\label{eq:fram-sigma}
    \widehat{\sigma}^2=\bigg( 0.5(n-1)^{-1} {\textstyle \sum_{i=1}^n }(y_i^*-\overline{y})^2 \bigg)^{-\frac{1}{2}},
\end{equation}
where $y^*_i = y_{1j}-y_{2j}$. Using the mixture model,
\begin{equation} \label{eq:fram-homogeneous}
    \overline{y}_i \mid \mu_i \sim \mathcal{N}(\mu_i, \widehat{\sigma}^2 / 2),~~ \mu_i \sim \pi(\mu),~~~~ i = 1, \ldots, n,
\end{equation}
where $\widehat{\sigma}^2$ is the plug-in estimator in \eqref{eq:fram-sigma}, we estimated the univariate mixing density $\pi(\mu)$ using neural-$g$, NPMLE, and Efron's $g$ with 5 or 20 degrees of freedom.

Instead of homogeneity, one can assume heterogeneous measurement errors for different patients. In this case, the additive measurement model \eqref{eq:additive-2} is $y_{ij} = \mu_i+ \eta_i$ where $\eta_i \overset{ind}{\sim} \mathcal{N}(0,\sigma^2_i)$. Then the model becomes the Gaussian location-scale mixture model \eqref{eq:location-scale} that we considered in Section \ref{sec:multivariate}, and our goal is to estimate the \emph{bivariate} density $\pi(\mu, \sigma^2)$ for the $\mu_i$'s and $\sigma_i^2$'s. We fit the bivariate neural-$g$ to this heterogeneous measurement error model.

\begin{figure}[t]
    \centering
    \includegraphics[width=1.0\textwidth]{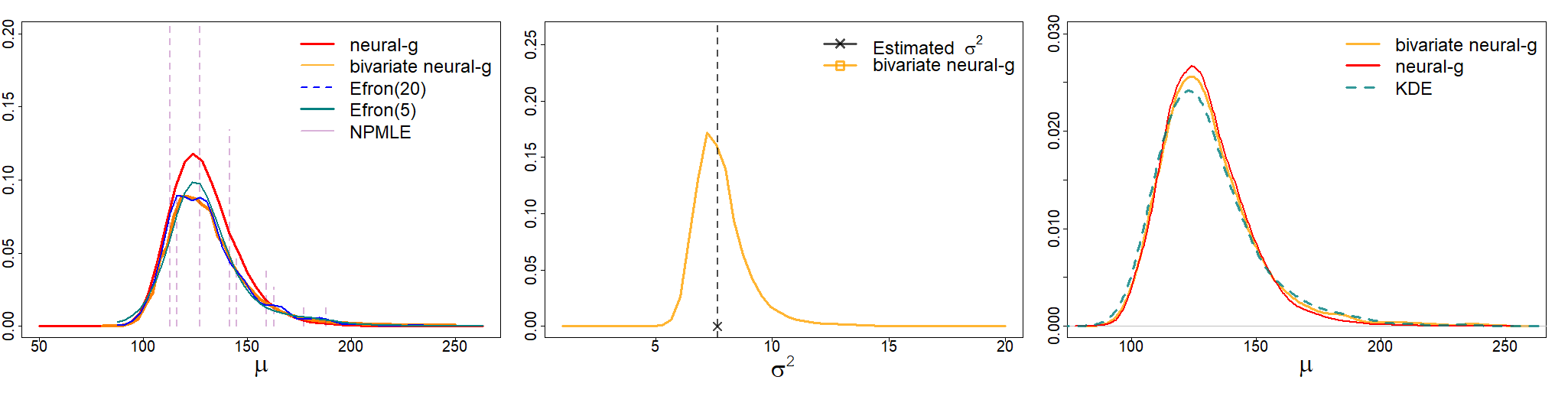}
\caption{\textbf{Left panel}: Estimated densities of $\pi(\mu)$ for the FHS data. \textbf{Middle panel}: Estimated marginal density for $\sigma^2$ in the FHS data under the bivariate neural-$g$ model. \textbf{Right panel}: Density estimates of uncontaminated $\text{SBP}_2$ levels based on 5000 samples of $\mu_i$'s from univariate neural-$g$ and bivariate neural-$g$ plotted against the kernel density estimate (KDE) of the observed (contaminated) $\text{SBP}_2$ levels $\bm{y}$.}\label{fig:fram}
\end{figure}

The left panel of Figure \ref{fig:fram} plots the estimated mixing densities $\pi(\mu)$ for univariate neural-$g$, NPMLE, Efron(5), and Efron(20) (i.e. under the assumption of homogeneous measurement errors), as well as the marginal density for $\mu$ under bivariate neural-$g$ (under the assumption of heterogeneous measurement errors). We observe that both neural-$g$ and Efron's $g$ estimators produced smooth estimates with a similar log-Gaussian shape. The NPMLE estimated an atomic density but its shape generally agreed with those of neural-$g$ and Efron's $g$. The middle panel of Figure \ref{fig:fram} plots the estimated marginal density for ${\sigma}^2$ under bivariate neural-$g$. We can see that the marginal density for $\sigma^2$ under the assumption of heterogeneous errors has a mode very close to the plug-in estimator $\widehat{\sigma}^2$ in \eqref{eq:fram-sigma} for the homogeneous measurement error model \eqref{eq:fram-homogeneous}. 

Finally, the right panel of Figure \ref{fig:fram} gives the kernel density plots based on 5000 samples of $\mu_i$ from $\widehat{\pi}(\mu)$ under univariate neural-$g$ and 5000 samples of $\mu_i$ from $\widehat{\pi}(\mu, \sigma^2)$ under bivariate neural-$g$. We compared these to a kernel density plot of the \emph{contaminated} data $\bm{y}$. We can see that the graph for the contaminated data has an obviously lower peak than the graphs for our two neural-$g$ estimates of the uncontaminated data. This illustrates the importance of accounting for measurement errors when inferring the density of the true $\text{SBP}_2$ levels in the FHS data.

\section{Conclusion}\label{sec:conclusion}

In this paper, we introduced neural-$g$, a new method for estimating a mixing density in $g$-modeling. Neural-$g$ uses a DNN with a softmax output layer to estimate the prior. Our method estimates a variety of priors well, including both smooth and non-smooth densities, thin-tailed and heavy-tailed densities, and flat and piecewise constant densities. Neural-$g$ occupies a happy medium between nonparametric approaches like NPMLE (which maximizes the total log-likelihood \eqref{eq:kwnpmle_sol} over \textit{all} PMFs defined on a finite parameter space $\Theta_m$) and parametric approaches like Efron's $g$ (which maximizes a penalized likelihood with respect to a parametric family on $\Theta_m$). Neural-$g$ is competitive with NPMLE when the true prior has only a few atoms and overcomes the limitations of the NPMLE \citep{koenker2020empirical} when the true prior is smooth. Meanwhile, neural-$g$ covers a wider class of densities than the curved exponential family of Efron's $g$ and avoids the challenge of selecting smoothing parameters \citep{koenker2019comment}.

Our approach is justified by a universal approximation theorem (Theorem \ref{theo:net-npmle}) showing neural-$g$'s capability to approximate any PMF arbitrarily well. To implement neural-$g$, we introduced the WAG optimizer, a variant of SGD which averages over previous epochs in addition to previous batch gradients. We further proposed multivariate neural-$g$ for estimating multivariate priors. We demonstrated our framework through a wide variety of simulations and real data applications.

There are a few extensions and improvements for future research.  First, it will be worthwhile to extend neural-$g$ to regression models with covariates such as generalized linear mixed models \citep{MagderZeger1996}, finite mixture of regression models \citep{jiang2021nonparametric}, and conditional density estimation \citep{izbicki2017converting, wang2024PGQR}. Secondly, our method -- like most other existing $g$-modeling approaches -- requires specification of a grid \textit{a priori} for estimation. An implementation of neural-$g$ that can either automatically select the suitable grid points or that does \emph{not} require ad-hoc discretization is of great interest.
\bibliographystyle{apalike}
    \bibliography{ref}


\newpage

\appendix

\section{Proofs of Theoretical Results} \label{sec:proofs}

\subsection{Proof of Theorem \ref{theo:net-npmle}}\label{sec:theo-net-npmle}




\begin{proof}
Note that we constructed $g_{\phi} \in \mathcal{G}_m$ in \eqref{eq:MLP-family} as a single-input and single-output (SISO) MLP which specifically maps a given $\theta_j \in \mathbb{R}^{1}$ to $[0,1]^{1}$. Moreover, a PMF function on discrete space $\Theta_m = \{ \theta_1, \ldots, \theta_m \}$ can be viewed as a composition of $m$ univariate single-input and single-output (SISO) functions with the constraint that their outputs sum to one. \cite{zeng2005approximation} established the universal approximation capability of multiple-input and single-ouput (MISO) MLPs. Insofar as a SISO MLP is a special case of a MISO MLP (with only one input), we can extend the result of \cite{zeng2005approximation} to neural-$g$.


Let $h_{\epsilon} = \log(2m \cdot p(\theta)/\epsilon+1)$. Since $m < \infty$, $h_{\epsilon}$ is a well-defined function on $\Tm$.
Let $h_{\epsilon}(\Tm)=[h_{\epsilon}(\theta_1),\ldots,h_{\epsilon}(\theta_m)]^\top$, and let $\text{softmax}(h_{\epsilon}(\Theta_m))$ denote the $m \times 1$ vector whose $j$th entry is $\exp(h_{\epsilon}(\theta_j)) / \sum_{j=1}^{m} \exp(h_{\epsilon}(\theta_j))$. Using the triangle inequality, we have 
\begin{align*}  \label{eq:triangle-inequality}
    \big\Vert g_{\phi}(\Tm) -p(\Tm)\big\Vert_{\infty}~ 
    &\leq~\big\Vert  g_{\phi}(\Tm) -\softmax(h_{\epsilon}(\Tm))\big\Vert_{\infty} + \big\Vert\softmax(h_{\epsilon}(\Tm))-p(\Tm)\big\Vert_{\infty} \\
    &\overset{\Delta}{=}~\Delta_1 + \Delta_2 \numbereqn
\end{align*}
We first consider $\Delta_1$. It is known that the $\softmax$ function is a continuous function on its domain, and $g_{{\phi}}(\Tm)$ is constructed with a softmax output layer. Let $v_{\phi}(\Tm) = [v_{\phi}(\theta_1), \ldots, v_{\phi}(\theta_m)]^\top$ be such that $g_{\phi}(\Tm) = \softmax(v_{\phi}(\Tm))$. By the definition of a continuous function, we have that for any $\epsilon/2> 0$, there exists a $\delta > 0$ such that
\begin{equation*}
    \text{If } \Vert v_{\phi}(\Tm)-h_{\epsilon}(\Tm) \Vert_{\infty} < \delta,~\text{ then }~  \big\Vert\softmax(v_{\phi}(\Tm))-\softmax(h_{\epsilon}(\Tm))\big\Vert_{\infty} < \epsilon/2.
\end{equation*}
According to Theorem 2 in \cite{zeng2005approximation}, for any given $\delta>0$, there exists a single-hidden-layer MISO MLP $M$ on $\Tm$ such that $\Vert M(\Tm)-h_{\epsilon}(\Tm)\Vert_{\infty} < \delta$. 
Since $g_{\phi}$ is constructed as a SISO MLP which is a special case of a MISO MLP, it is immediate that $\Vert v_{\phi}(\Tm)-h_{\epsilon}(\Tm)\Vert_{\infty} < \delta$.
Then for any $\epsilon > 0$, there exists a single-hidden-layer SISO MLP $g_{\phi} \in \mathcal{G}_m$ satisfying 
\begin{equation} \label{eq:Delta-1}
    \Delta_1 = \big\Vert
    g_{\phi}(\Theta_m) -\softmax(h_{\epsilon}(\Tm))\big\Vert_{\infty} < \epsilon/2.
\end{equation}
Secondly, for $\Delta_2$, we have 
\begin{align*} \label{eq:Delta-2}
\Delta_2 &= \big\Vert\softmax(h_{\epsilon}(\Tm))-p(\Tm)\big\Vert_{\infty} \\
&= \underset{k \in \{ 1,\ldots, m \}}{\max} \bigg\vert\frac{\exp(\log(2m \cdot p(\Tm)_k/\epsilon+1))}{\sum^m_{j=1} \exp(\log(2m \cdot p(\Tm)_j/\epsilon+1))}-p(\Tm)_k \bigg\vert\\
&=\underset{k\in \{ 1,\ldots, m \}}{\max} \bigg\vert \frac{2m \cdot p(\Tm)_k/\epsilon+1}{2m\sum^m_{j=1}p(\Tm)_j/\epsilon+m}-p(\Tm)_k \bigg\vert\\
&=\underset{k\in \{1,\ldots, m\}}{\max} \bigg\vert \frac{2m \cdot p(\Tm)_k/\epsilon+1}{2m/\epsilon+m}-p(\Tm)_k \bigg\vert\\
&=\underset{k\in \{1,\ldots, m\}}{\max} \bigg\vert \frac{2m \cdot p(\Tm)_k/\epsilon+1-2m \cdot p(\Tm)_k/\epsilon-m \cdot p(\Tm)_k}{2m/\epsilon+m}\bigg\vert\\
&\leq\frac{1}{2m/\epsilon+m} \bigg[ \underset{k: p(\Tm)_k\neq0}{\max} \vert 1-m \cdot p(\Tm)_k \vert +  \underset{k: p(\Tm)_k=0}{\max} \vert 1-m \cdot p(\Tm)_k\vert \bigg]\\
&=\frac{m}{2m/\epsilon+m} \bigg[ \underset{k: p(\Tm)_k\neq0}{\max} \big| \underbrace{\frac{1}{m}-p(\Tm)_k}_{\leq 1-\frac{1}{m}} \big|  +  \underset{k: p(\Tm)_k=0}{\max} \big| \frac{1}{m}-k\cdot 0 \big| \bigg] \\
&\leq\frac{m}{2m/\epsilon+m}, \numbereqn \\
\end{align*}
where $p(\Theta_m)_{k}$ denotes the $k$th entry of $p(\Theta_m)$. Combining \eqref{eq:triangle-inequality}-\eqref{eq:Delta-2}, we have
\begin{equation*}
\lVert g_{\phi}(\Tm) - p(\Tm)\big\Vert_{\infty}~
\leq~  \Delta_1 + \Delta_2  ~<~ \frac{\epsilon}{2}+\frac{m}{2m/\epsilon+m}   ~=~ \frac{\epsilon}{2}+ \frac{1}{1+\frac{2}{\epsilon}} < \frac{\epsilon}{2} + \frac{1}{2/\epsilon}  = \epsilon.
\end{equation*}
The proof is done. 
\end{proof}

\subsection{Proof of Theorem \ref{theo:net-npmle-2}}\label{sec:theo-net-npmle-2}

\begin{proof}
	By Assumption (A2), the global minimizer $\widehat{\pi}_m$ for the loss function \eqref{eq:optim-over-Thetam} exists. Let $\widehat{\pi}_m(\Theta_m) = ( \widehat{\pi}_m(\theta_1), \ldots, \widehat{\pi}_m(\theta_m ))^\top$, where the $j$th entry of $\widehat{\pi}_m(\Theta_m)$ is the probability assigned to $\theta_m$ by $\widehat{\pi}_m$. Further, let $\pi_{\text{neural-}g} = \sum_{j=1}^{m} g_{\phi}(\theta_j) \delta_{\theta_j}$, where $g_{\phi} \in \mathcal{G}_m$ is a single-hidden-layer network in the MLP family \eqref{eq:MLP-family}, such that
	\begin{equation} \label{eq:pi-g}
		\lVert \widehat{\pi}_m (\Theta_m) - g_{\phi}(\Theta_m ) \rVert_{\infty} < \frac{c_1 \epsilon}{m c_2},
	\end{equation}
	where $c_1$ and $c_2$ are positive constants to be specified in the next two paragraphs. By Theorem \ref{theo:net-npmle}, such a network $g_{\phi}$ in \eqref{eq:pi-g} must exist.
	
	To ease the notation, let $L_i(\pi) = \sum_{j=1}^{m} f(y_i \mid \theta_j) \pi(\theta_j)$ for a generic prior $\pi$. Then the loss function \eqref{eq:optim-over-Thetam} can be written as $\ell(\pi) = -n^{-1} \sum_{i=1}^{n} \log L_i(\pi)$.
	Assumption (A2) implies that $L_i(\widehat{\pi}_m) > 0$ for all $i = 1, \ldots, n$, because if not, then $\ell(\widehat{\pi}_m)$  would either be infinite or undefined, and $\widehat{\pi}_m$ would not be a global minimizer. Further, since $g_{\phi}$ has a softmax output layer, it must be that $g_{\phi} (\theta_j) > 0$ for all $\theta_j \in \Theta_m$. By Assumption (A1) that $f(y_i \mid \theta) > 0$ for at least one $\theta \in \Theta_m$ and all $i = 1, \ldots, n$, it follows that $L_i(\pi_{\text{neural-}g}) = \sum_{j=1}^{m} f(y_i \mid \theta_j) g_{\phi}(\theta_j) > 0$ for all $i = 1, \ldots, n$.  Thus, for some constant $c_1 > 0$, we have
	$c_1 \leq \min \big\{ L_i(\widehat{\pi}_m), L_i(\pi_{\text{neural-}g}) \big\}$ for all $i = 1, \ldots, n$. By Lipschitz continuity of the log function on the interval $[ c_1, \infty)$, 
	\begin{align} \label{eq:Lipschitz}
		| \log L_i(\widehat{\pi}_m) - \log L_i (\pi_{\text{neural-}g}) | \leq \frac{1}{c_1} \big| L_i(\widehat{\pi}_m) - L_i(\pi_{\text{neural-}g}) \big|.  
	\end{align}
	Meanwhile, we also have by Assumption (A1) that there exists a constant $c_2 > c_1$ such that 
	\begin{align} \label{eq:max-c2}
		\max_{1 \leq i \leq n, 1 \leq j \leq m} f(y_i \mid \theta_j) \leq c_2 < \infty.
	\end{align}
	It follows that
	\begin{align*} \label{eq:ell-diff}
		\big| \ell(\pi_{\text{neural-}g}) - \ell(\widehat{\pi}_m) \big| & = \bigg| \frac{1}{n} \sum_{i=1}^{n}  \big[ \log L_i(\widehat{\pi}_m) - \log L_i (\pi_{\text{neural-}g}) \big] \bigg| \\
		& \leq \frac{1}{n} \sum_{i=1}^{n} \big| \log L_i(\widehat{\pi}_m) - \log L_i(\pi_{\text{neural-}g}) \big| \\
		& \leq \frac{1}{c_1 n} \sum_{i=1}^{n} \big| L_i(\widehat{\pi}_m) - L_i(\pi_{\text{neural-}g}) \big| \\
		& \leq \frac{1}{c_1 n} \sum_{i=1}^{n} \sum_{j=1}^{m} \bigg| f(y_i \mid \theta_j) \big[ \widehat{\pi}_m(\theta_j) - \pi_{\text{neural-}g}(\theta_j) \big] \bigg| \\
		& \leq \frac{c_2}{c_1} \sum_{j=1}^{m} \big| \widehat{\pi}_m(\theta_j) - \pi_{\text{neural-}g}(\theta_j) \big| \\
		& \leq \frac{m c_2}{c_1} \lVert \widehat{\pi}_m(\Theta_m) - g_{\phi}(\Theta_m)  \rVert_{\infty} \\
		& < \frac{m c_2}{c_1} \cdot \frac{c_1 \epsilon}{m c_2} = \epsilon, \numbereqn
	\end{align*}
	where we used \eqref{eq:Lipschitz} in the third line, \eqref{eq:max-c2} in the fifth line, and \eqref{eq:pi-g} in the last line. The proof is done.
\end{proof}

\section{Sensitivity Analysis}\label{sec:sensitivity}

In this section, we investigate neural-$g$'s robustness to the hyperparameters in the neural network architecture, i.e. the number of hidden layers $L$ and the number of neurons $h$ in each hidden layer. 
In the first experiment, we fixed $h= 500$ and varied $L \in \{ 1, 2, 3, 4, 5\}$. In the second experiment, we fixed $L=5$ and varied $h \in \{ 200, 500, 800, 1200, 2000 \}$. We repeated  all of the simulations from Section \ref{sec:simulate} using different combinations of $(L, h)$. 

Our results averaged across 10 replications are reported in Tables \ref{tab:NN-depth-sim} (for fixed width) and Table \ref{tab:NN-width-sim} (for fixed depth). These results demonstrate that neural-$g$ is fairly robust to the specific choices of $L$ and $h$. In Simulations I-IV (i.e. uniform, piecewise constant, heavy-tailed, and bounded priors), neural-$g$ still vastly outperformed NPMLE and Efron's $g$ (Table \ref{tab:m1-m3}), with lower average Wasserstein-1 discrepancy and average MAE of the Bayes estimators, regardless of $L$ or $h$. From Table \ref{tab:NN-depth-sim}, we also observe that when $L$ was greater than one, the performance of neural-$g$ improved slightly. However, even with a single hidden layer, neural-$g$ still generally performed well.



\vspace{0.2cm}
\renewcommand{\arraystretch}{1.1}
\begin{table}[h]
\caption{Simulation results for neural-$g$ with width $h=500$ and number of hidden layers $L \in \{1, 2, 3, 4, 5\}$. The displayed results are averaged over 10 replications.}
\centering
\resizebox{1.0\columnwidth}{!}{%
\begin{tabular}{c|cc|cc|cc|cc|cc|}
\hline
&\multicolumn{2}{c|}{$L=1$}&\multicolumn{2}{c|}{$L=2$}&\multicolumn{2}{c|}{$L=3$}&\multicolumn{2}{c|}{$L=4$}&\multicolumn{2}{c|}{$L=5$}\\
\hline
Prior $\pi$&$W_1(\pi,\widehat{\pi}_{\bm{t}})$&$\text{MAE}$&$W_1(\pi,\widehat{\pi}_{\bm{t}})$&$\text{MAE}$&$W_1(\pi,\widehat{\pi}_{\bm{t}})$&$\text{MAE}$&$W_1(\pi,\widehat{\pi}_{\bm{t}})$&$\text{MAE}$&$W_1(\pi,\widehat{\pi}_{\bm{t}})$&$\text{MAE}$
\\
\hline
\hline
Uniform&0.109&0.051&0.072&0.044&0.067&0.035&0.060&0.030&0.061&0.030\\
Piecewise constant&0.075&0.029&0.061&0.029&0.047&0.029&0.054&0.030&0.052&0.030\\
Heavy-tailed&0.073&0.031&0.057&0.022&0.051&0.028&0.055&0.028&0.053&0.025\\
Bounded&0.032&0.008&0.031&0.005&0.031&0.005&0.031&0.005&0.031&0.005\\
Point Mass&4.641&0.049&4.579&0.036&4.547&0.026&4.555&0.023&4.553&0.017\\
Gaussian&0.103&0.066&0.075&0.044&0.057&0.029&0.055&0.025&0.04&0.014\\
\hline
\end{tabular}}
\label{tab:NN-depth-sim}
\end{table}
\renewcommand{\arraystretch}{1.1}
\begin{table}[h!]
\caption{Simulation results for neural-$g$ with five hidden layers and width $h \in \{200, 500, 800, 1200, 2000 \}$. The displayed results are averaged over 10 replications.}
\centering
\resizebox{1.0\columnwidth}{!}{%
\begin{tabular}{c|cc|cc|cc|cc|cc|}
\hline
&\multicolumn{2}{c|}{$h=200$}&\multicolumn{2}{c|}{$h=500$}&\multicolumn{2}{c|}{$h=800$}&\multicolumn{2}{c|}{$h=1200$}&\multicolumn{2}{c|}{$h=2000$}\\
\hline
Prior $\pi$&$W_1(\pi,\widehat{\pi}_{\bm{t}})$&$\text{MAE}$&$W_1(\pi,\widehat{\pi}_{\bm{t}})$&$\text{MAE}$&$W_1(\pi,\widehat{\pi}_{\bm{t}})$&$\text{MAE}$&$W_1(\pi,\widehat{\pi}_{\bm{t}})$&$\text{MAE}$&$W_1(\pi,\widehat{\pi}_{\bm{t}})$&$\text{MAE}$
\\
\hline
\hline
Uniform&0.066&0.021&0.059&0.025&0.064&0.023&0.068&0.021&0.074&0.039\\
Piecewise constant&0.051&0.03&0.057&0.03&0.059&0.03&0.054&0.03&0.057&0.03\\
Heavy-tailed&0.052&0.028&0.055&0.030&0.056&0.028&0.052&0.025&0.056&0.027 \\
Bounded&0.031&0.007&0.030&0.005&0.031&0.005&0.031&0.005&0.031&0.005 \\
Point Mass &4.533&0.024&4.517&0.024&4.566&0.025&4.522&0.024&4.595&0.025\\
Gaussian&0.049&0.027&0.048&0.025&0.046&0.019&0.080&0.048&0.041&0.015\\
\hline
\end{tabular}}
\label{tab:NN-width-sim}
\end{table}


\section{Additional Figures}\label{sec:addl-figures}
In Section \ref{sec:real}, we analyzed mutation counts on the protein domain cd00031. Here, we provide the plots for the estimated prior densities $\pi(\lambda)$ for protein domains cd00180 and cd00882 in Figures \ref{fig:protein2} and \ref{fig:protein3}. As shown in these figures, neural-$g$ gives a very similar result to Efron's $g$ for cd00180 and cd00882. This is in contrast to cd00031 (Figure \ref{fig:protein}) where the neural-$g$ estimator departed from Efron's $g$, particularly for counts between 15 and 25. 

\begin{figure}[H]
    \centering
    \begin{subfigure}[b]{0.48\textwidth}
    \includegraphics[width=\textwidth]{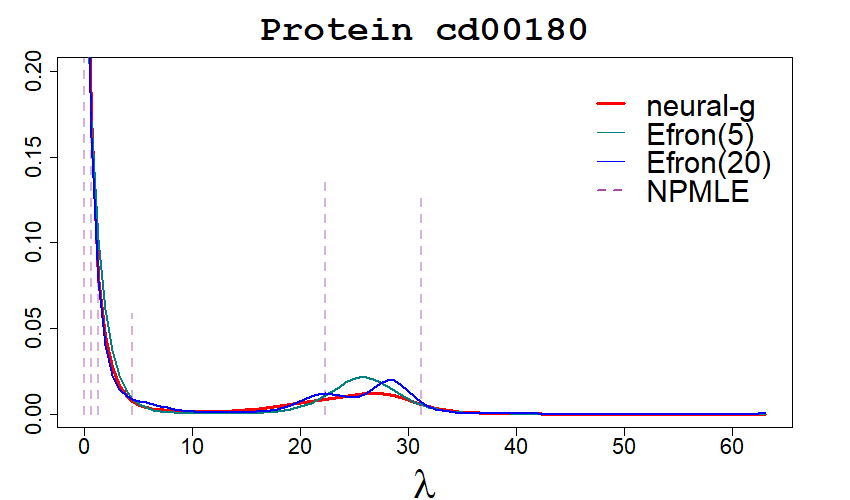}
    \caption{Protein cd00180 dataset}
    \label{fig:protein2}
    \end{subfigure}
    \begin{subfigure}[b]{0.48\textwidth}
    \includegraphics[width=\textwidth]{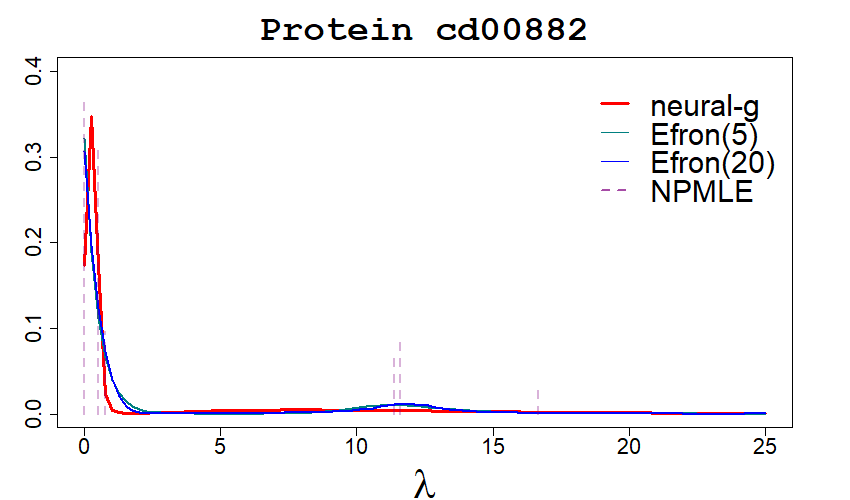}
    \caption{Protein cd00882 dataset}
    \label{fig:protein3}
    \end{subfigure}
\caption{Estimated priors for the protein domain cd00180 and protein domain cd00882 datasets.}
\end{figure}

\end{document}